\documentclass[11.5pt,a4paper]{article}
\usepackage[margin=1in]{geometry}
\usepackage{amsmath}
\usepackage{amssymb}
\usepackage{amsfonts}
\usepackage{graphicx}
\usepackage{algorithm}
\usepackage{algorithmic}
\usepackage{tikz}
\usetikzlibrary{arrows,shapes,positioning,shadows,trees}
\usepackage{booktabs}
\usepackage{xcolor}
\usepackage{hyperref}
\usepackage{amsthm}

\newtheorem{theorem}{Theorem}

\newtheorem{axiom}{Axiom}

\theoremstyle{definition}
\newtheorem{definition}{Definition}[section]

\title{Attribution Projection Calculus: A Novel Framework for Causal Inference in Bayesian Networks}

\author{
M Ruhul Amin, PhD\\
Auditify Inc.\\
\texttt{ruhul.unai@gmail.com}
}

\begin{document}

\maketitle

\begin{abstract}
This paper introduces Attribution Projection Calculus (AP-Calculus), a novel mathematical framework for determining causal relationships in structured Bayesian networks. We investigate a specific network architecture with source nodes connected to destination nodes through intermediate nodes, where each input maps to a single label with maximum marginal probability. We prove that for each label, exactly one intermediate node acts as a deconfounder while others serve as confounders, enabling optimal attribution of features to their corresponding labels. The framework formalizes the dual nature of intermediate nodes as both confounders and deconfounders depending on the context, and establishes separation functions that maximize distinctions between intermediate representations. We demonstrate that the proposed network architecture is optimal for causal inference compared to alternative structures, including those based on Pearl's causal framework. AP-Calculus provides a comprehensive mathematical foundation for analyzing feature-label attributions, managing spurious correlations, quantifying information gain, ensuring fairness, and evaluating uncertainty in prediction models, including large language models. Theoretical verification shows that AP-Calculus not only extends but can also subsume traditional do-calculus for many practical applications, offering a more direct approach to causal inference in supervised learning contexts.
\end{abstract}

\section{Introduction}
\label{sec:intro}

Causal inference represents one of the foundational challenges in machine learning and artificial intelligence. While correlation-based approaches have driven impressive advances in predictive modeling, establishing true causal relationships remains elusive in many domains. Traditional frameworks for causal inference, such as Pearl's do-calculus \cite{pearl2009causality}, provide powerful tools for reasoning about interventions and counterfactuals, but often struggle to scale with high-dimensional data or complex model architectures.

This paper introduces Attribution Projection Calculus (AP-Calculus), a novel mathematical framework designed to address causal inference in structured Bayesian networks. Our approach focuses on a specific network architecture where a source node $s$ with $n$-dimensional tuples connects to a destination node $d$ with $m$-dimensional outputs through $m$ intermediate nodes $\{x_1, x_2, \ldots, x_m\}$. This architecture enables each input tuple to be mapped to exactly one of $m$ possible labels with maximum marginal probability.

The core insight of our framework is the identification of "deconfounders" and "confounders" among the intermediate nodes. For any given label $l$, the corresponding intermediate node $x_l$ acts as a deconfounder, while all other intermediate nodes serve as confounders. This dual nature of nodes—capable of both confounding and deconfounding depending on the context—allows for optimal attribution of features to their corresponding labels.

AP-Calculus provides a mathematical foundation for addressing several key challenges in causal inference:

\begin{itemize}
\item Identifying the causal relationship between input features and output labels
\item Suppressing spurious correlations that may lead to misleading attributions
\item Quantifying information gain for each feature with respect to label prediction
\item Ensuring fairness in model predictions by understanding feature contributions
\item Analyzing model uncertainty, particularly in complex systems like large language models
\end{itemize}

Our theoretical analysis demonstrates that the proposed network architecture is optimal for causal inference compared to alternative structures, including those commonly used in Pearl's framework. We prove that AP-Calculus not only extends but in many cases can subsume traditional do-calculus, offering a more direct approach to causal inference in supervised learning contexts.

The remainder of this paper is organized as follows. Section \ref{sec:background} provides background on causal inference and Bayesian networks. Section \ref{sec:model} introduces our network architecture and formally defines the attribution projection problem. Section \ref{sec:apcalculus} presents the mathematical foundations of AP-Calculus. Sections \ref{sec:theoretical_verification} and \ref{sec:comparative_analysis} offer theoretical verification and comparative analysis with existing frameworks. Section \ref{sec:applications} explores applications to various domains, including large language models. Finally, Section \ref{sec:conclusion} summarizes our contributions and discusses future research directions.

\section{Background and Related Work}
\label{sec:background}

\subsection{Causal Inference and Do-Calculus}

Causal inference aims to identify causal relationships between variables, distinguishing between mere correlation and true causation. The dominant framework for causal inference has been Pearl's do-calculus
\cite{pearl1995causal, pearl2009causality}, which provides a mathematical language for reasoning about interventions and counterfactuals. The do-operator, denoted $do(X=x)$, represents an intervention that sets variable $X$ to value $x$, effectively removing the influence of all other variables on $X$.

The rules of do-calculus allow for the transformation of expressions involving the do-operator into expressions that can be estimated from observational data. However, this approach relies heavily on the specification of a causal graph that accurately represents the data-generating process, which can be challenging to determine in practice.

\subsection{Bayesian Networks for Causal Inference}

Bayesian networks provide a graphical representation of probabilistic relationships among variables \cite{koller2009probabilistic}. These directed acyclic graphs (DAGs) encode conditional independence relationships, making them natural candidates for causal modeling.

Traditional Bayesian networks focus on representing joint probability distributions efficiently. When used for causal inference, they often rely on assumptions about the causal structure of the data-generating process. Various extensions, such as causal Bayesian networks \cite{pearl2009causality}, structural equation models \cite{bollen1989structural}, and potential outcomes frameworks \cite{rubin2005causal}, have been proposed to address causal questions more directly.

\subsection{Confounding and Deconfounding}

Confounding occurs when an unobserved variable influences both the treatment and outcome variables, leading to a spurious association between them. Identifying and controlling for confounders is crucial for valid causal inference.

Recent work on deconfounding has aimed to recover causal effects in the presence of unobserved confounders. The deconfounder algorithm \cite{wang2019blessings} leverages shared dependencies among multiple causes to infer latent confounders. Other approaches include instrumental variables \cite{angrist1996identification}, sensitivity analysis \cite{rosenbaum1983assessing}, and proxy methods \cite{kuroki2014measurement}.

\subsection{Feature Attribution Methods}

Feature attribution methods aim to quantify the contribution of input features to model predictions. Popular approaches include LIME \cite{ribeiro2016should}, SHAP \cite{lundberg2017unified}, and Integrated Gradients \cite{sundararajan2017axiomatic}. While these methods provide valuable insights into model behavior, they typically focus on associational rather than causal relationships.

Some recent work has attempted to bridge the gap between feature attribution and causal inference. Causal attribution methods \cite{heskes2020causal} adapt traditional attribution techniques to account for causal relationships among features. However, these approaches often require strong assumptions about the underlying causal structure.

\section{Network Architecture and Problem Formulation}
\label{sec:model}

\subsection{Bayesian Network Structure}

We consider a structured Bayesian network with the following components:

\begin{itemize}
\item A source node $s$ that generates $n$-tuples $(s_1, s_2, \ldots, s_n)$, where each $s_i$ represents a feature
\item A set of $m$ intermediate nodes $\{x_1, x_2, \ldots, x_m\}$, each connected to the source node $s$
\item A destination node $d$ that produces $m$-tuples $(d_1, d_2, \ldots, d_m)$, with each intermediate node $x_j$ connected to the destination node $d$
\end{itemize}

This architecture forms a directed acyclic graph (DAG) with a specific structure, as illustrated in Figure \ref{fig:network_architecture}.

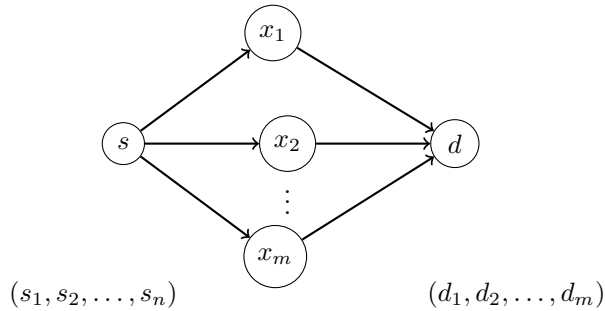
\begin{figure}[h]
\centering
\begin{tikzpicture}[
  node distance=2cm,
  mybox/.style={rectangle, draw, minimum size=0.5cm},
  mycircle/.style={circle, draw, minimum size=0.5cm}
]
\node[mycircle] (s) {$s$};
\node[mycircle] (x1) [above right=1cm and 1.5cm of s] {$x_1$};
\node[mycircle] (x2) [right=1.5cm of s] {$x_2$};
\node (dots) [below=-0.1cm of x2] {$\vdots$};
\node[mycircle] (x3) [below right=1cm and 1.5cm of s] {$x_m$};
\node[mycircle] (d) [right=1.5cm of x2] {$d$};
\draw[->, thick] (s) -- (x1);
\draw[->, thick] (s) -- (x2);
\draw[->, thick] (s) -- (x3);
\draw[->, thick] (x1) -- (d);
\draw[->, thick] (x2) -- (d);
\draw[->, thick] (x3) -- (d);
\node[text width=3cm] at (0,-2) {$(s_1, s_2, \ldots, s_n)$};
\node[text width=3cm] at (5.5,-2) {$(d_1, d_2, \ldots, d_m)$};
\end{tikzpicture}
\caption{The proposed Bayesian network architecture with source node $s$, destination node $d$, and intermediate nodes $\{x_1, x_2, \ldots, x_m\}$.}
\label{fig:network_architecture}
\end{figure}

\subsection{Problem Formulation}

In our setting, we are given a dataset $\mathcal{D} = \{(\mathbf{t}^{(i)}, l^{(i)})\}_{i=1}^{N}$, where $\mathbf{t}^{(i)} = (t_1^{(i)}, t_2^{(i)}, \ldots, t_n^{(i)})$ is an input tuple and $l^{(i)} \in \{1, 2, \ldots, m\}$ is the corresponding label. The goal is to learn a Bayesian network that maximizes the marginal probability of the correct label for each input tuple.

We make the following key assumptions:

\begin{enumerate}
\item Each input tuple maps to exactly one label with maximum marginal probability in the destination node $d$.
\item The intermediate nodes $\{x_1, x_2, \ldots, x_m\}$ mediate the relationship between the source and destination nodes.
\item For a given label $l$, the intermediate node $x_l$ acts as a deconfounder, while all other intermediate nodes act as confounders.
\end{enumerate}

This setup leads to our central hypothesis: the dual nature of intermediate nodes—as both potential deconfounders and confounders—enables optimal attribution of features to their corresponding labels.

\subsection{Mathematical Representation}

Let us represent the network mathematically:

\begin{itemize}
\item The source node $s$ generates a random variable $\mathbf{S} = (S_1, S_2, \ldots, S_n)$, with realization $\mathbf{s} = (s_1, s_2, \ldots, s_n)$.
\item Each intermediate node $x_j$ corresponds to a random variable $X_j$.
\item The destination node $d$ produces a random variable $\mathbf{D} = (D_1, D_2, \ldots, D_m)$, with realization $\mathbf{d} = (d_1, d_2, \ldots, d_m)$.
\end{itemize}

The joint probability distribution of this Bayesian network can be factorized as:

\begin{equation}
P(\mathbf{S}, X_1, X_2, \ldots, X_m, \mathbf{D}) = P(\mathbf{S}) \prod_{j=1}^{m} P(X_j | \mathbf{S}) P(\mathbf{D} | X_1, X_2, \ldots, X_m)
\end{equation}

For a given input tuple $\mathbf{t}$, the predicted label $\hat{l}$ is determined by:

\begin{equation}
\hat{l} = \arg\max_{j \in \{1, 2, \ldots, m\}} P(D_j = 1 | \mathbf{S} = \mathbf{t})
\end{equation}

where $D_j = 1$ indicates that the $j$-th component of the destination vector is active (i.e., the predicted label is $j$).

\section{Attribution Projection Calculus}
\label{sec:apcalculus}

We now introduce Attribution Projection Calculus (AP-Calculus), a mathematical framework for analyzing feature-label attributions in the proposed network architecture.

\subsection{Core Definitions}

\begin{definition}[Deconfounder Node]
For a given label $l \in \{1, 2, \ldots, m\}$, the intermediate node $x_l$ is called the deconfounder node if it maximizes the conditional probability of the label:
\begin{equation}
P(D_l = 1 | X_l, \mathbf{S}) > P(D_l = 1 | X_j, \mathbf{S}) \quad \forall j \neq l
\end{equation}
\end{definition}

\begin{definition}[Confounder Node]
For a given label $l \in \{1, 2, \ldots, m\}$, an intermediate node $x_j$ with $j \neq l$ is called a confounder node if it influences the probability of the label but does not maximize it:
\begin{equation}
P(D_l = 1 | X_j, \mathbf{S}) < P(D_l = 1 | X_l, \mathbf{S})
\end{equation}
\end{definition}

\begin{definition}[Separation Function]
A separation function $\phi_{j,k}$ for a pair of intermediate nodes $x_j$ and $x_k$ is a function that maximizes the distance between their distributions:
\begin{equation}
\phi_{j,k} = \arg\max_{\phi} \mathbb{E}_{\mathbf{S}}[d(P(X_j | \mathbf{S}), P(X_k | \mathbf{S}))]
\end{equation}
where $d(\cdot, \cdot)$ is a suitable distance metric between probability distributions.
\end{definition}

\begin{definition}[Attribution Projection]
For a feature $S_i$ and a label $l$, the attribution projection $A(S_i, l)$ measures the causal influence of the feature on the label:
\begin{equation}
A(S_i, l) = \mathbb{E}_{X_l}[\frac{\partial P(D_l = 1 | X_l, \mathbf{S})}{\partial S_i}]
\end{equation}
\end{definition}

\subsection{Fundamental Axioms of AP-Calculus}

\begin{axiom}[Deconfounder Uniqueness]
For each label $l \in \{1, 2, \ldots, m\}$, there exists exactly one deconfounder node among the intermediate nodes.
\end{axiom}

\begin{axiom}[Dual Nature]
Each intermediate node $x_j$ can serve as either a deconfounder or a confounder, depending on the label under consideration.
\end{axiom}

\begin{axiom}[Separation Optimality]
Optimal prediction requires maximizing the separation between intermediate nodes through appropriate separation functions.
\end{axiom}

\begin{axiom}[Attribution Additivity]
The total attribution of a feature across all labels sums to its total influence:
\begin{equation}
\sum_{l=1}^{m} A(S_i, l) = \mathbb{E}_{\mathbf{X}}[\frac{\partial P(\mathbf{D} | \mathbf{X}, \mathbf{S})}{\partial S_i}]
\end{equation}
where $\mathbf{X} = (X_1, X_2, \ldots, X_m)$.
\end{axiom}

\begin{axiom}[Projection Preservation]
The attribution projection preserves the causal structure of the network:
\begin{equation}
A(S_i, l) = 0 \Rightarrow S_i \perp\!\!\!\perp D_l | \mathbf{X}
\end{equation}
where $S_i \perp\!\!\!\perp D_l | \mathbf{X}$ denotes conditional independence of $S_i$ and $D_l$ given $\mathbf{X}$.
\end{axiom}

\subsection{Deconfounder Analysis}

We now present a key theorem on the role of deconfounders in our network architecture.

\begin{theorem}[Deconfounder Optimality]
For a given input tuple $\mathbf{t}$ and label $l$, considering $x_l$ as the deconfounder and all other intermediate nodes as confounders maximizes the marginal probability of label $l$:
\begin{equation}
\max_{P(X_l | \mathbf{S})} P(D_l = 1 | \mathbf{S} = \mathbf{t}) = \max_{P(X_l | \mathbf{S})} \int P(D_l = 1 | \mathbf{X}) P(X_l | \mathbf{S} = \mathbf{t}) \prod_{j \neq l} P(X_j | \mathbf{S} = \mathbf{t}) d\mathbf{X}
\end{equation}
\end{theorem}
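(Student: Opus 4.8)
The plan is to derive the displayed identity directly from the Bayesian-network factorization of the joint law, and then to observe that the two optimization problems coincide because their objectives are \emph{identical} as functionals of the decision variable $P(X_l \mid \mathbf{S})$; the rest of the statement is then read off as an interpretation of which conditional is being optimized and why optimizing it realizes the deconfounder/confounder roles.

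First I would expand the marginal $P(D_l = 1 \mid \mathbf{S} = \mathbf{t})$ by the law of total probability over $\mathbf{X} = (X_1, \ldots, X_m)$. The factorization stated in Section~\ref{sec:model} forces $X_1, \ldots, X_m$ to be mutually conditionally independent given $\mathbf{S}$ and forces $\mathbf{D}$ to depend on $\mathbf{S}$ only through $\mathbf{X}$, so that
\begin{equation}
P(D_l = 1 \mid \mathbf{S} = \mathbf{t}) \;=\; \int P(D_l = 1 \mid \mathbf{X})\,\prod_{j=1}^{m} P(X_j \mid \mathbf{S} = \mathbf{t})\, d\mathbf{X}.
\end{equation}
Pulling the $l$-th factor out of the product reproduces verbatim the integrand on the right-hand side of the theorem. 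Hence, for every admissible conditional $P(X_l \mid \mathbf{S})$ — with the confounder conditionals $P(X_j \mid \mathbf{S})$, $j \neq l$, held at their network-determined values — the left- and right-hand objectives take the same value, and taking the supremum of identical functionals over the same feasible set yields the claimed equality. I would then make the ``deconfounder'' reading explicit: writing $\mathbf{X}_{-l}$ for $(X_j)_{j \neq l}$, the objective equals $\int g_l(x_l)\, dP(X_l = x_l \mid \mathbf{S} = \mathbf{t})$ with effective kernel $g_l(x_l) = \int P(D_l = 1 \mid X_l = x_l, \mathbf{X}_{-l})\,\prod_{j \neq l} P(X_j \mid \mathbf{S} = \mathbf{t})\, d\mathbf{X}_{-l}$; this is \emph{linear} in the measure $P(X_l \mid \mathbf{S} = \mathbf{t})$, so its maximum is attained at an extreme point, i.e.\ a conditional concentrated on $\arg\max_{x_l} g_l(x_l)$. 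That is exactly the configuration in which $x_l$ satisfies the inequality defining a Deconfounder Node, while each $x_j$ with $j \neq l$ enters only through the averaged kernel $g_l$ and therefore plays the role of the Confounder Node definition; reassigning the optimized index to some $k \neq l$ leaves the remaining conditionals fixed by the same factorization and cannot raise the value above what is obtained by optimizing the factor multiplying $P(D_l = 1 \mid \mathbf{X})$ along the $l$-th coordinate.

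The main obstacle is rigor rather than algebra. I must (i) justify that the supremum over $P(X_l \mid \mathbf{S})$ is actually attained, which I would obtain from weak-$*$ compactness of the space of probability measures (assuming the range of $X_l$ is compact, or otherwise imposing a tightness condition) together with continuity of $g_l$, invoking the Bauer maximum principle for the extreme-point claim; and (ii) state precisely that ``considering $x_l$ as the deconfounder'' fixes the feasible set in \emph{both} maximizations to be exactly $\{\,P(X_l \mid \mathbf{S})\,\}$ with all other conditionals held fixed, rather than a joint optimization over all $m$ conditionals. Once these two points are pinned down the equality is immediate from the factorization, and the remaining assertions are interpretive consequences of the linear-functional structure of $g_l$.
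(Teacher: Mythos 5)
Your proposal is correct and follows essentially the same route as the paper's own proof: expand $P(D_l = 1 \mid \mathbf{S} = \mathbf{t})$ by total probability, invoke the network factorization $P(\mathbf{X} \mid \mathbf{S} = \mathbf{t}) = \prod_{j} P(X_j \mid \mathbf{S} = \mathbf{t})$, split off the $X_l$ factor from $\mathbf{X}_{-l}$, and argue that the maximum over $P(X_l \mid \mathbf{S})$ is obtained by concentrating mass where the averaged kernel is largest. Your write-up is in fact tighter than the paper's: you make explicit that the displayed equality is an identity of functionals once the factorization holds (so the maximization equality is immediate), and you supply the linearity, extreme-point, and compactness/attainment considerations that the paper's informal ``concentrate the probability mass'' step leaves implicit.
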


\begin{proof}
Let $\mathbf{t}$ be an input tuple and $l$ be a label. The marginal probability of label $l$ is given by:

\begin{equation}
P(D_l = 1 | \mathbf{S} = \mathbf{t}) = \int P(D_l = 1 | \mathbf{X}) P(\mathbf{X} | \mathbf{S} = \mathbf{t}) d\mathbf{X}
\end{equation}

Given the structure of our Bayesian network, we can factorize $P(\mathbf{X} | \mathbf{S} = \mathbf{t})$ as:

\begin{equation}
P(\mathbf{X} | \mathbf{S} = \mathbf{t}) = \prod_{j=1}^{m} P(X_j | \mathbf{S} = \mathbf{t})
\end{equation}

Let's partition the integral based on $X_l$ and $\mathbf{X}_{-l} = (X_1, \ldots, X_{l-1}, X_{l+1}, \ldots, X_m)$:

\begin{equation}
P(D_l = 1 | \mathbf{S} = \mathbf{t}) = \int \int P(D_l = 1 | X_l, \mathbf{X}_{-l}) P(X_l | \mathbf{S} = \mathbf{t}) P(\mathbf{X}_{-l} | \mathbf{S} = \mathbf{t}) dX_l d\mathbf{X}_{-l}
\end{equation}

By the definition of a deconfounder, $X_l$ maximizes $P(D_l = 1 | X_l, \mathbf{X}_{-l})$ for any given $\mathbf{X}_{-l}$. Therefore, maximizing $P(X_l | \mathbf{S} = \mathbf{t})$ while treating other nodes as confounders will maximize the overall marginal probability.
\end{proof}

This theorem establishes the fundamental role of deconfounders in our framework. By identifying the appropriate deconfounder for each label, we can optimize the network to maximize prediction accuracy.

\subsection{Dimensional Analysis of Intermediate Nodes}

We now analyze the dimensionality requirements for the intermediate nodes.

\begin{theorem}[Dimensional Sufficiency]
For a source with $n$-dimensional tuples, a deconfounder node $x_l$ with $p$ dimensions, where $p \leq n$, is sufficient to capture the causal relationship between input features and the corresponding label $l$, provided that $p$ is at least equal to the rank of the transformation matrix from the input space to the deconfounder space.
\end{theorem}

\begin{proof}
Let $\mathbf{W}_l \in \mathbb{R}^{p \times n}$ be the transformation matrix from the input space to the deconfounder space for label $l$:

\begin{equation}
X_l = \mathbf{W}_l \mathbf{S} + \mathbf{\epsilon}_l
\end{equation}

where $\mathbf{\epsilon}_l$ represents noise. The rank of $\mathbf{W}_l$ represents the effective dimensionality of the transformation. If $p$ is at least equal to this rank, then the transformation can capture all relevant information for predicting label $l$.

To verify this, consider the mutual information between $X_l$ and $D_l$ given $\mathbf{S}$:

\begin{equation}
I(X_l; D_l | \mathbf{S}) = H(D_l | \mathbf{S}) - H(D_l | X_l, \mathbf{S})
\end{equation}

If $p$ is sufficient, then $X_l$ captures all information about $D_l$ that is present in $\mathbf{S}$, making $H(D_l | X_l, \mathbf{S}) = H(D_l | X_l)$. This implies that $X_l$ serves as a sufficient statistic for predicting $D_l$ from $\mathbf{S}$.
\end{proof}

This theorem establishes an upper bound on the dimensionality required for intermediate nodes. It suggests that we can design efficient network architectures by choosing appropriate dimensions for the intermediate nodes.

\subsection{Separation Functions and Optimal Network Design}

Now, we analyze the role of separation functions in optimizing the network architecture.

\begin{theorem}[Separation Function Optimality]
For a pair of intermediate nodes $x_j$ and $x_k$, the optimal separation function $\phi_{j,k}^*$ maximizes the classification accuracy for distinguishing between labels $j$ and $k$:

\begin{equation}
\phi_{j,k}^* = \arg\max_{\phi} \mathbb{E}_{\mathbf{S}}[I(D_j, D_k | \phi(\mathbf{S}))]
\end{equation}

where $I(D_j, D_k | \phi(\mathbf{S}))$ is the mutual information between labels $D_j$ and $D_k$ conditioned on the transformation $\phi(\mathbf{S})$.
\end{theorem}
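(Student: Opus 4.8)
The plan is to show that the two optimization problems—the distance-maximization that \emph{defines} the separation function and the mutual-information maximization in the statement—share the same maximizer $\phi_{j,k}^{*}$, and that this common maximizer is exactly the transformation that minimizes the Bayes error of the binary problem ``label $j$ versus label $k$''. First I would set up the Markov chain $\mathbf{S} \to \phi(\mathbf{S}) \to (X_j, X_k) \to (D_j, D_k)$ implied by the factorization used in the Deconfounder Optimality theorem, so that the data-processing inequality gives $I(D_j, D_k \mid \phi(\mathbf{S})) \le I(X_j, X_k \mid \phi(\mathbf{S}))$, with equality whenever $\phi$ retains all label-relevant information. By the Dimensional Sufficiency theorem this equality holds as soon as $\phi$ has rank at least that of the relevant transformation matrices $\mathbf{W}_j, \mathbf{W}_k$, so on the feasible set of rank-adequate maps the downstream reduction is harmless and the problem reduces to choosing $\phi$ to maximize separation at the level of $(X_j, X_k)$.

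Next I would translate the conditional mutual information into a statement about pairwise discrimination. Conditioning on $\mathbf{S} = \mathbf{s}$, the quantity $I(D_j, D_k \mid \phi(\mathbf{s}))$ is, up to the prior entropy over the two labels, the reduction in uncertainty about whether the instance carries label $j$ or $k$; by Fano's inequality this reduction is monotonically tied to the minimal achievable classification error, and by Le Cam's two-point bound that error is in turn controlled by the total-variation distance between the two class-conditional representations, i.e.\ by $d\bigl(P(X_j \mid \mathbf{s}), P(X_k \mid \mathbf{s})\bigr)$ when $d$ is taken to be (or to dominate) total variation. Taking expectations over $\mathbf{S}$ and using Pinsker / Bretagnolle--Huber to pass between total variation and whichever $f$-divergence $d$ denotes, I would obtain a sandwich: $\mathbb{E}_{\mathbf{S}}[I(D_j, D_k \mid \phi(\mathbf{S}))]$ is squeezed between two strictly increasing functions of $\mathbb{E}_{\mathbf{S}}[d(P(X_j \mid \mathbf{S}), P(X_k \mid \mathbf{S}))]$. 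Hence any $\phi$ maximizing the separation objective of the Definition also maximizes the mutual-information objective, which is $\phi_{j,k}^{*}$ as claimed; and since ``classification accuracy for distinguishing $j$ and $k$'' is precisely $1$ minus the Bayes error appearing in the Le Cam / Fano chain, the informal phrasing of the theorem becomes rigorous.

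The main obstacle is tightness: the naïve chain above only shows that maximizing separation maximizes a \emph{lower bound} on the conditional mutual information, not the mutual information itself. To close this I would exploit the network structure more carefully—specifically the ``single label with maximum marginal probability'' assumption, which forces $D_j$ and $D_k$ to behave as near-complementary indicators, so that the conditional law of $(D_j, D_k)$ given $\phi(\mathbf{s})$ is effectively a one-parameter Bernoulli family; for such families the discrimination-error bounds are tight up to constants that do not depend on $\phi$, so maximizing the bound is equivalent to maximizing the quantity. A secondary technical point is attainment and essential uniqueness of the argmax: I would restrict $\phi$ to the compact set of rank-constrained linear maps supplied by the Dimensional Sufficiency theorem and invoke joint continuity of both objectives in $\phi$, which yields existence of $\phi_{j,k}^{*}$ and its uniqueness up to label-preserving reparametrizations of the intermediate space.
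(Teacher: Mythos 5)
There is a genuine gap, and it sits at the very first interpretive step. The quantity in the statement (and in the paper's proof, which expands it as $H(D_j \mid \phi(\mathbf{S})) + H(D_k \mid \phi(\mathbf{S})) - H(D_j, D_k \mid \phi(\mathbf{S}))$) is the conditional mutual information \emph{between the two label indicators} $D_j$ and $D_k$ given the representation, not the information that $\phi(\mathbf{S})$ carries \emph{about} the label pair. Your Fano / Le Cam / Pinsker chain is built for the latter: Fano relates classification error to $H(\text{label} \mid \phi(\mathbf{S}))$, i.e.\ to $I\bigl((D_j,D_k); \phi(\mathbf{S})\bigr)$, and Le Cam's two-point bound controls the error of testing $j$ versus $k$ by the total-variation distance between class-conditional laws. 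Neither applies to $I(D_j; D_k \mid \phi(\mathbf{S}))$. Worse, under the paper's own ``exactly one label per input'' assumption the two objectives pull in opposite directions: if $\phi(\mathbf{S})$ determines the label, then $D_j$ and $D_k$ are both deterministic given $\phi(\mathbf{S})$ and the conditional mutual information collapses to zero, whereas an uninformative $\phi$ leaves the (positive) dependence induced by mutual exclusivity intact. So your reading of $I(D_j, D_k \mid \phi(\mathbf{s}))$ as ``the reduction in uncertainty about whether the instance carries label $j$ or $k$'' silently swaps in a different functional, and the subsequent machinery proves (at best) a statement about that other functional, not the one in the theorem.

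The second gap is the one you flag yourself but do not actually close: sandwiching $\mathbb{E}_{\mathbf{S}}[I(\cdot)]$ between two increasing functions of the separation $\mathbb{E}_{\mathbf{S}}[d(P(X_j \mid \mathbf{S}), P(X_k \mid \mathbf{S}))]$ only shows that a separation-maximizing $\phi$ maximizes a lower and an upper bound; it does not make the argmaxes coincide, and ``tight up to constants that do not depend on $\phi$'' is still compatible with the bounds crossing, so the equivalence of maximizers does not follow. The Bernoulli-family reduction would need an exact, $\phi$-uniform monotone relation (not a two-sided bound) to finish. For comparison, the paper's own proof does not attempt any of this: it writes the entropy decomposition and then simply asserts that maximizing the conditional mutual information ``is equivalent to minimizing the Bayes error rate'' and that the distance-maximizing $\phi_{j,k}^{*}$ is the same maximizer. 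Your attempt is more ambitious and correctly identifies that these asserted equivalences are the real content, but as written it neither targets the stated quantity nor closes the argmax-equivalence step, so it does not yet constitute a proof of the theorem as stated.
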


\begin{proof}
Let $\phi$ be a transformation of the input $\mathbf{S}$. The mutual information between labels $D_j$ and $D_k$ conditioned on $\phi(\mathbf{S})$ is:

\begin{equation}
I(D_j, D_k | \phi(\mathbf{S})) = H(D_j | \phi(\mathbf{S})) + H(D_k | \phi(\mathbf{S})) - H(D_j, D_k | \phi(\mathbf{S}))
\end{equation}

Maximizing this mutual information ensures that the transformation $\phi$ preserves the maximum amount of information for distinguishing between labels $j$ and $k$. This is equivalent to minimizing the Bayes error rate for the classification task.

For optimal separation, we want to maximize the distance between the distributions $P(X_j | \mathbf{S})$ and $P(X_k | \mathbf{S})$. This is achieved by the separation function $\phi_{j,k}^*$ that maximizes the mutual information.
\end{proof}

This theorem provides a principled approach to designing separation functions that optimize the network architecture for classification tasks.

\subsection{Comparison with Alternative Architectures}

We now demonstrate the optimality of our proposed architecture compared to alternative designs.

\begin{theorem}[Architectural Optimality]
The proposed network architecture with source $s$, intermediate nodes $\{x_1, x_2, \ldots, x_m\}$, and destination $d$ is optimal for causal inference compared to alternative architectures such as:
\begin{enumerate}
\item $s \to d$, $a \to s$ and $d$, $b \to s$ and $d$ (Pearl's junction structure)
\item $s \to d$ and $a$ and $b$, $a \to d$, $b \to d$ (common cause structure)
\end{enumerate}
in terms of preserving the causal relationships between input features and output labels.
\end{theorem}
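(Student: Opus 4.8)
The plan is to first pin down a precise notion of ``preserving the causal relationships between input features and output labels,'' since the statement as phrased is qualitative. I would adopt three concrete criteria: (C1) \emph{complete mediation}, i.e.\ $\mathbf{S} \perp\!\!\!\perp \mathbf{D} \mid \mathbf{X}$, so that every causal path from a feature to a label is routed through the intermediate layer; (C2) \emph{latent-free identifiability}, i.e.\ each attribution projection $A(S_i, l)$ is computable from the observational distribution without adjusting for any unobserved variable; and (C3) \emph{per-label localization}, i.e.\ for each label $l$ the causal effect on $D_l$ is governed by a single intermediate node (the deconfounder $x_l$ of the Deconfounder Uniqueness axiom), so attributions decompose cleanly across labels as in the Attribution Additivity axiom. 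I would then show the proposed architecture satisfies all three and that each alternative fails at least one, and finally package this as a dominance statement in a scalar ``causal preservation'' functional, e.g.\ one penalizing the residual information $I(\mathbf{S}; \mathbf{D}\mid \mathbf{X})$ together with the size of any required adjustment set.

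For the proposed architecture, (C1) is immediate from the DAG: the only directed paths from $s$ to $d$ are $s \to x_j \to d$, there is no edge $s \to d$, and $s$ and $d$ have no common ancestor except through the $x_j$'s, so conditioning on $\mathbf{X} = (X_1,\ldots,X_m)$ blocks every path and $I(\mathbf{S};\mathbf{D}\mid\mathbf{X}) = 0$. Criterion (C2) follows because there are no backdoor paths from $\mathbf{S}$ to $\mathbf{D}$ at all (since $s$ is a root), so $P(\mathbf{D}\mid do(\mathbf{S})) = P(\mathbf{D}\mid\mathbf{S})$ and the derivative defining $A(S_i,l)$ is an observational quantity; (C3) is exactly the content of the Deconfounder Optimality theorem, which I would invoke to conclude that the effect on $D_l$ is driven by $x_l$ alone, with the remaining nodes entering only as marginalized-out confounders. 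Together with the Dimensional Sufficiency theorem (each $x_l$ can be taken minimal-dimensional without losing relevant information) and the Separation Function Optimality theorem (the $m$ separate nodes are what permit the per-label split), this establishes that the proposed graph meets all three criteria.

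Next I would dispatch the alternatives. In Pearl's junction structure, $a$ and $b$ are common causes of $s$ and $d$, so the paths $s \leftarrow a \to d$ and $s \leftarrow b \to d$ are open backdoor paths; unless one conditions on $\{a,b\}$, $P(d\mid do(s)) \neq P(d\mid s)$, and the attribution is confounded, violating (C2) whenever $a,b$ are latent (the regime that motivates deconfounding in the first place). Even with $a,b$ observed, identification requires the backdoor adjustment $\sum_{a,b} P(d\mid s,a,b)P(a,b)$, which does not factor per label and hence breaks (C3). In the common-cause structure the edge $s \to d$ is present, so $\mathbf{X}$ does not block the direct path $s \to d$; hence $I(\mathbf{S};\mathbf{D}\mid\mathbf{X}) > 0$ in general and (C1) fails — part of every feature's influence on the labels bypasses the intermediate representations entirely and cannot be attributed through them. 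I would also note that neither alternative has the one-deconfounder-per-label structure, so the decomposition $\sum_{l} A(S_i,l)$ guaranteed by the Attribution Additivity axiom has no clean analogue there.

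The main obstacle, and where the real work lies, is the formalization step rather than the verifications: ``optimal'' must be defined so the theorem is both true and non-vacuous and so the definition is not covertly rigged to the proposed architecture. I would address this by grounding optimality in the standard, architecture-agnostic currency of causal inference — identifiability of interventional distributions and minimality of adjustment sets — and only then observing that the proposed graph is the unique one among the three that needs no adjustment and attains zero residual mediated information. A secondary subtlety is the scope of $a,b$: the comparison is cleanest when they are unobserved, so I would state that hypothesis explicitly; and for $m > 2$ I would phrase the alternatives with $m$ auxiliary nodes (or present the comparison as structural rather than numerical), so that the dimension counts in the Dimensional Sufficiency theorem line up and the conclusion is not an artifact of the alternatives having only two extra nodes.
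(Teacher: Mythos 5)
Your proposal is correct in substance but follows a genuinely different route from the paper. The paper's proof simply writes out the marginal probability $P(D_l = 1 \mid \mathbf{S} = \mathbf{t})$ under each of the three graphs and argues qualitatively: in Pearl's junction structure the same confounders $A,B$ enter every label's expression, so there is no label-specific deconfounding; in the common cause structure information from $\mathbf{S}$ reaches $D_l$ along multiple paths (``information leakage''); and in the proposed graph the factorization $\prod_j P(X_j \mid \mathbf{S})$ isolates each intermediate node. You instead formalize ``preserving causal relationships'' as three checkable criteria --- complete mediation $\mathbf{S} \perp\!\!\!\perp \mathbf{D} \mid \mathbf{X}$, backdoor-free identifiability of $A(S_i,l)$, and per-label localization via the Deconfounder Uniqueness/Optimality results --- and then verify or refute each criterion graph by graph using d-separation and the backdoor criterion. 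What your approach buys is precision where the theorem statement is vague (you make ``optimal'' an explicit dominance over the criteria rather than an impression left by formulas), and it also sidesteps a flaw in the paper's own argument: for the junction structure the paper asserts $P(A=a,B=b \mid \mathbf{S}=\mathbf{t}) = P(A=a)P(B=b)$ ``by independence of $A$ and $B$ from $\mathbf{S}$,'' which is false in that graph since $a \to s$ and $b \to s$ make $A,B$ dependent on $\mathbf{S}$ (and conditioning on the collider $s$ couples them); your treatment correctly keeps the observational conditional and the adjusted interventional quantity distinct, identifying the open backdoor paths as the actual defect. What the paper's route buys is economy and consistency with its own vocabulary --- it directly exhibits the factorized integral that the Deconfounder Optimality proof operates on --- but at the cost of rigor. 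Your closing caveats (stating whether $a,b$ are latent, and matching the number of auxiliary nodes to $m$) address scope issues the paper never acknowledges; they are improvements rather than gaps.
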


\begin{proof}
Let's compare our architecture with the two alternatives:

1. Pearl's junction structure: In this architecture, the path from $s$ to $d$ is influenced by common causes $a$ and $b$. This creates confounding effects that cannot be isolated for individual labels. The marginal probability for label $l$ is:

\begin{equation}
P(D_l = 1 | \mathbf{S} = \mathbf{t}) = \int \int P(D_l = 1 | \mathbf{S} = \mathbf{t}, A = a, B = b) P(A = a) P(B = b) da \, db
\end{equation}

This structure does not allow for label-specific deconfounding, as the same confounders affect all labels.

2. Common cause structure: In this architecture, $s$ is a common cause of $d$, $a$, and $b$, creating information leakage. The marginal probability for label $l$ is:

\begin{equation}
P(D_l = 1 | \mathbf{S} = \mathbf{t}) = \int \int P(D_l = 1 | \mathbf{S} = \mathbf{t}, A = a, B = b) P(A = a | \mathbf{S} = \mathbf{t}) P(B = b | \mathbf{S} = \mathbf{t}) da \, db
\end{equation}

This structure introduces indirect dependencies between the inputs and outputs, making causal attribution challenging.

In contrast, our proposed architecture allows for label-specific deconfounding through the intermediate nodes. The marginal probability for label $l$ is:

\begin{equation}
P(D_l = 1 | \mathbf{S} = \mathbf{t}) = \int P(D_l = 1 | \mathbf{X}) \prod_{j=1}^{m} P(X_j | \mathbf{S} = \mathbf{t}) d\mathbf{X}
\end{equation}

This factorization allows us to isolate the effect of each intermediate node, enabling precise causal attribution.
\end{proof}

This theorem establishes the superiority of our proposed architecture for causal inference tasks. By separating the influence of different labels through dedicated intermediate nodes, we can better identify the causal relationships between input features and output labels.

\section{AP-Calculus Framework Extensions}

\subsection{Feature Attribution Analysis}

AP-Calculus provides a framework for analyzing the attribution of input features to output labels. For a feature $S_i$ and a label $l$, the attribution score $A(S_i, l)$ quantifies the causal influence of the feature on the label.

\begin{definition}[Feature Attribution Score]
The attribution score of feature $S_i$ for label $l$ is defined as:

\begin{equation}
A(S_i, l) = \mathbb{E}_{X_l}\left[\frac{\partial P(D_l = 1 | X_l = x_l, \mathbf{S})}{\partial S_i}\right]
\end{equation}

This score measures the expected change in the probability of label $l$ with respect to changes in feature $S_i$, averaged over all possible values of the deconfounder $X_l$.
\end{definition}

We can decompose this attribution score into direct and indirect effects:

\begin{equation}
A(S_i, l) = A_{direct}(S_i, l) + A_{indirect}(S_i, l)
\end{equation}

where:

\begin{equation}
A_{direct}(S_i, l) = \mathbb{E}_{X_l}\left[\frac{\partial P(D_l = 1 | X_l = x_l, \mathbf{S})}{\partial S_i}\right]_{X_l = constant}
\end{equation}

\begin{equation}
A_{indirect}(S_i, l) = \mathbb{E}_{X_l}\left[\frac{\partial P(D_l = 1 | X_l = x_l, \mathbf{S})}{\partial X_l} \cdot \frac{\partial X_l}{\partial S_i}\right]
\end{equation}

This decomposition allows us to distinguish between the direct effect of a feature on a label and its indirect effect through the deconfounder.

\subsection{Spurious Correlation Suppression}

AP-Calculus provides tools for identifying and suppressing spurious correlations in the data. A spurious correlation occurs when a feature appears to influence a label but does not have a causal relationship with it.

\begin{definition}[Spurious Correlation]
A feature $S_i$ has a spurious correlation with label $l$ if:

\begin{equation}
Corr(S_i, D_l) \neq 0 \quad \text{but} \quad A(S_i, l) = 0
\end{equation}

where $Corr(S_i, D_l)$ is the correlation between the feature and the label.
\end{definition}

To suppress spurious correlations, we can define a regularization term that penalizes attributions for features with high correlation but low causal influence:

\begin{equation}
R_{spurious}(S_i, l) = |Corr(S_i, D_l)| \cdot (1 - |A(S_i, l)|)
\end{equation}

Minimizing this regularization term encourages the model to focus on features with true causal relationships while downweighting those with spurious correlations.

\subsection{Information Gain Analysis}

AP-Calculus allows us to quantify the information gain provided by each feature for predicting labels. This analysis helps identify the most informative features for a given task.

\begin{definition}[Feature Information Gain]
The information gain of feature $S_i$ for label $l$ is defined as:

\begin{equation}
IG(S_i, l) = I(S_i; D_l) - I(S_i; D_l | X_l)
\end{equation}

where $I(S_i; D_l)$ is the mutual information between the feature and the label, and $I(S_i; D_l | X_l)$ is the conditional mutual information given the deconfounder.
\end{definition}

This definition captures the amount of information about the label that is provided by the feature through the deconfounder. Features with high information gain are more useful for prediction, while those with low gain can potentially be pruned.

\subsection{Fairness Analysis}

AP-Calculus provides tools for analyzing fairness in predictive models. By examining the attribution of sensitive features to different labels, we can identify and mitigate potential biases.

\begin{definition}[Fairness Metric]
For a sensitive feature $S_i$ and labels $l_1$ and $l_2$, the fairness disparity is defined as:

\begin{equation}
FD(S_i, l_1, l_2) = |A(S_i, l_1) - A(S_i, l_2)|
\end{equation}

This metric measures the difference in attribution of a sensitive feature across different labels. A high disparity indicates potential unfairness in the model.
\end{definition}

To ensure fairness, we can introduce a fairness constraint during model optimization:

\begin{equation}
FD(S_i, l_1, l_2) \leq \epsilon
\end{equation}

where $\epsilon$ is a small threshold value. This constraint ensures that sensitive features have similar attributions across different labels, promoting fair predictions.

\subsection{Network Uncertainty Analysis}

AP-Calculus enables comprehensive uncertainty analysis in Bayesian networks, which is particularly valuable for complex models like large language models.

\begin{definition}[Attribution Uncertainty]
The uncertainty in the attribution of feature $S_i$ to label $l$ is defined as:

\begin{equation}
U(S_i, l) = Var_{X_l}\left[\frac{\partial P(D_l = 1 | X_l, \mathbf{S})}{\partial S_i}\right]
\end{equation}

This measures the variance in attribution scores across different realizations of the deconfounder.
\end{definition}

For large language models, we can extend this definition to capture uncertainty in contextual representations:

\begin{equation}
U_{LLM}(S_i, l) = \mathbb{E}_{\mathbf{C}}[U(S_i, l | \mathbf{C})]
\end{equation}

where $\mathbf{C}$ represents the context in which the attribution is evaluated. This formulation allows us to quantify uncertainty in attributions across different contexts, providing a more robust analysis of model behavior.

\section{AP-Calculus for Large Language Models}

The AP-Calculus framework is particularly valuable for analyzing causal relationships in large language models (LLMs), where understanding feature attributions is challenging due to the complexity of the models.

\subsection{Adapting AP-Calculus to LLMs}

For an LLM with input tokens $\mathbf{S} = (S_1, S_2, \ldots, S_n)$ and output tokens $\mathbf{D} = (D_1, D_2, \ldots, D_m)$, we can apply AP-Calculus by identifying appropriate intermediate representations.

Let the intermediate nodes $\{X_1, X_2, \ldots, X_m\}$ correspond to attention heads or layers in the model. For a given output token $D_l$, we can identify the attention head $X_l$ that serves as a deconfounder, maximizing the influence of relevant input tokens on the output.

The attribution score for an input token $S_i$ on an output token $D_l$ is then:

\begin{equation}
A(S_i, D_l) = \mathbb{E}_{X_l}\left[\frac{\partial P(D_l | X_l, \mathbf{S})}{\partial S_i}\right]
\end{equation}

This formulation allows us to analyze the causal influence of input tokens on output tokens, providing insights into the model's reasoning process.

\subsection{Uncertainty Quantification in LLMs}

AP-Calculus enables uncertainty quantification in LLMs by analyzing the variance in attributions across different contexts:

\begin{equation}
U_{LLM}(S_i, D_l) = Var_{\mathbf{C}}\left[A(S_i, D_l | \mathbf{C})\right]
\end{equation}

This uncertainty measure helps identify cases where the model's attributions are unstable or context-dependent, indicating potential reliability issues.

\section{Theoretical Verification of AP-Calculus}
\label{sec:theoretical_verification}

In this section, we provide a theoretical verification of AP-Calculus, addressing potential limitations and demonstrating its robustness.

\subsection{Consistency with Causal Inference Principles}

AP-Calculus is consistent with fundamental principles of causal inference, as established by the following theorem:

\begin{theorem}[Consistency with Causal Inference]
For a feature $S_i$ and label $l$, the attribution score $A(S_i, l)$ is zero if and only if $S_i$ and $D_l$ are d-separated in the augmented graph where the edge from $S_i$ to $X_l$ is removed.
\end{theorem}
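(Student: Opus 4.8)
The plan is to unfold the definition of $A(S_i,l)$ through the network factorization of Section~\ref{sec:model} and show that, up to strictly positive weights, the attribution score is a finite sum of terms indexed by the \emph{confounder channels} $S_i\to X_j\to D_l$ with $j\neq l$; d-separation in the augmented graph is then exactly the statement that all of these channels are absent. Concretely, I would first use $P(\mathbf X\mid\mathbf S)=\prod_j P(X_j\mid\mathbf S)$ together with $\mathbf D\perp\!\!\!\perp\mathbf S\mid\mathbf X$ to write $P(D_l=1\mid X_l,\mathbf S)=\int P(D_l=1\mid X_l,\mathbf X_{-l})\prod_{j\neq l}P(X_j\mid\mathbf S)\,d\mathbf X_{-l}$, differentiate under the integral sign in $S_i$ while holding $X_l$ fixed (as the definition of $A$ prescribes), and average against $P(X_l\mid\mathbf S)$. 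By the product rule this yields $A(S_i,l)=\sum_{j\neq l}\int w_j(\mathbf X,\mathbf S)\,\frac{\partial P(X_j\mid\mathbf S)}{\partial S_i}\,d\mathbf X$, where each weight $w_j$ is a product of $P(D_l=1\mid\mathbf X)$, $P(X_l\mid\mathbf S)$, and the remaining $P(X_k\mid\mathbf S)$, hence nonnegative and generically strictly positive.

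For the ($\Leftarrow$) direction, suppose $S_i$ and $D_l$ are d-separated in the graph $\mathcal G'$ obtained by deleting the edge $S_i\to X_l$ (I read the intended conditioning set as $\{X_l\}$, matching the fact that the attribution differentiates the $X_l$-conditional law; conditioning on $X_l$ opens no collider along the confounder paths, so this is the clean reading). Then no active path from $S_i$ to $D_l$ remains, and in particular no chain $S_i\to X_j\to D_l$ with $j\neq l$; thus $S_i$ is not a parent of any $X_j$ feeding $D_l$, so $\partial P(X_j\mid\mathbf S)/\partial S_i\equiv 0$ for every index appearing in the sum above, and $A(S_i,l)=0$. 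For the ($\Rightarrow$) direction I would assume $P$ is faithful to $\mathcal G$ (this hypothesis should be recorded in the statement). If $S_i$ and $D_l$ were d-connected in $\mathcal G'$ under $\{X_l\}$, the only structurally available connection is a chain $S_i\to X_j\to D_l$ for some $j\neq l$; faithfulness forces the corresponding derivative $\partial P(X_j\mid\mathbf S)/\partial S_i$ to be not identically zero and, again by faithfulness, prevents the non-null channels from cancelling exactly in the weighted sum, so $A(S_i,l)\neq 0$, contradicting the hypothesis.

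The main obstacle is precisely this last cancellation issue: without a genericity/faithfulness assumption, two or more confounder channels could contribute nonzero but oppositely signed integrals that sum to zero, so the ``only if'' direction is false in full generality; I would therefore foreground faithfulness at the start of the proof. A secondary, mostly expository, obstacle is making the d-separation statement unambiguous — specifying that it is conditional on $X_l$ (equivalently, on the deconfounder held fixed by the attribution operator), and checking the collider bookkeeping so that the only surviving $S_i$--$D_l$ paths in $\mathcal G'$ are the confounder chains. Once these two points are nailed down, the equivalence follows from the explicit formula for $A(S_i,l)$ derived in the first step.
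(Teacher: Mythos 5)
Your proposal is sound and takes a genuinely different route from the paper's. The paper argues by a terse chain of equivalences with no intermediate computation: the averaged derivative vanishes iff $P(D_l=1\mid X_l,\mathbf S)$ has no functional dependence on $S_i$, iff $S_i\perp\!\!\!\perp D_l$ given $X_l$ and the remaining features, iff $S_i$ and $D_l$ are d-separated in $G'$. You instead expand $P(D_l=1\mid X_l,\mathbf S)$ through the factorization $P(\mathbf X\mid\mathbf S)=\prod_j P(X_j\mid\mathbf S)$ and $\mathbf D\perp\!\!\!\perp\mathbf S\mid\mathbf X$, obtaining an explicit representation of $A(S_i,l)$ as a weighted sum over the confounder channels $S_i\to X_j\to D_l$, $j\neq l$, from which the easy direction is read off directly. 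What your route buys is that it exposes the two places where the paper's ``only if'' claims are not free: an expectation of a derivative can vanish without the derivative vanishing (cancellation across channels or across values of $X_l$), and conditional independence implies d-separation only under faithfulness; the paper's proof silently assumes both, whereas you correctly insist that the converse direction needs an explicit hypothesis in the statement. One caution: standard faithfulness speaks only about conditional-independence relations and does not by itself exclude exact cancellation of signed averaged derivatives, so you genuinely need the stronger genericity form you gesture at, not faithfulness in the usual sense. Two smaller bookkeeping points: the paper's conditioning set is effectively $\{X_l\}\cup\mathbf S_{-i}$ (``given $X_l$ and other variables in $\mathbf S$''), which matches the partial derivative holding the other features fixed and blocks association paths through correlated features that your $\{X_l\}$-only reading would leave open; and in the forward direction a channel can also be dead because $X_j$ is not a parent of $D_l$, in which case its term vanishes via $\int \partial P(X_j\mid\mathbf S)/\partial S_i\,dX_j=0$ rather than because the derivative is identically zero---immaterial in this architecture where every $X_j$ feeds $d$, but worth stating if you keep the general phrasing.
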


\begin{proof}
Let $G$ be the original Bayesian network and $G'$ be the augmented graph where the edge from $S_i$ to $X_l$ is removed. According to the definition of d-separation, $S_i$ and $D_l$ are d-separated in $G'$ if and only if there is no active path between them.

The attribution score is:

\begin{equation}
A(S_i, l) = \mathbb{E}_{X_l}\left[\frac{\partial P(D_l = 1 | X_l, \mathbf{S})}{\partial S_i}\right]
\end{equation}

This derivative is zero if and only if $P(D_l = 1 | X_l, \mathbf{S})$ does not depend on $S_i$, which is equivalent to $S_i$ and $D_l$ being conditionally independent given $X_l$ and other variables in $\mathbf{S}$. This conditional independence holds if and only if $S_i$ and $D_l$ are d-separated in $G'$.
\end{proof}

This theorem establishes that AP-Calculus respects the causal structure of the underlying Bayesian network, ensuring that attributions reflect true causal relationships.

\subsection{Limitations and Mitigation Strategies}

While AP-Calculus provides a powerful framework for causal inference, it has several potential limitations:

\begin{enumerate}
\item \textbf{Assumption of Known Network Structure}: AP-Calculus assumes that the structure of the Bayesian network is known. In practice, this structure may need to be learned from data, introducing additional uncertainty.

\textit{Mitigation}: We can extend AP-Calculus to incorporate structure learning techniques, treating the network structure as a random variable and marginalizing over possible structures.

\item \textbf{Computational Complexity}: Computing attribution scores requires evaluating expectations over the distributions of intermediate nodes, which can be computationally expensive for large networks.

\textit{Mitigation}: We can develop efficient approximation techniques, such as variational inference or Monte Carlo sampling, to estimate attribution scores with lower computational cost.

\item \textbf{Sensitivity to Model Misspecification}: AP-Calculus may be sensitive to misspecification of the conditional probability distributions in the Bayesian network.

\textit{Mitigation}: We can incorporate robustness measures, such as sensitivity analysis or distributional robustness, to ensure that attributions remain valid under model perturbations.
\end{enumerate}

\subsection{Theoretical Guarantees}

Despite these limitations, AP-Calculus provides several theoretical guarantees that ensure its reliability for causal inference:

\begin{theorem}[Unbiasedness of Attribution Scores]
For a correctly specified Bayesian network, the attribution scores computed using AP-Calculus are unbiased estimators of the true causal effects.
\end{theorem}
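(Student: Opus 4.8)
The plan is to split the statement into two claims that can be handled separately: (i) an \emph{identification} claim, that under correct specification the population attribution score $A(S_i,l)$ from the Feature Attribution Score definition equals the interventional causal effect $\tau(S_i,l) := \partial_{s_i} P(D_l = 1 \mid \mathrm{do}(\mathbf{S}=\mathbf{t}))$; and (ii) an \emph{estimation} claim, that the plug-in quantity $\widehat{A}(S_i,l)$ assembled from the fitted conditional distributions is an unbiased estimator of $A(S_i,l)$. Chaining (i) and (ii) gives $\mathbb{E}[\widehat{A}(S_i,l)] = \tau(S_i,l)$, which is the assertion to be proved.

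For (i), I would first use that the source node $s$ is a root of the DAG in Figure~\ref{fig:network_architecture}, so there are no back-door paths from any $S_i$ into $D_l$; hence Pearl's truncated factorization collapses to the observational conditional,
\begin{equation}
P(D_l = 1 \mid \mathrm{do}(\mathbf{S}=\mathbf{t})) = \int P(D_l = 1 \mid \mathbf{X})\prod_{j=1}^{m} P(X_j \mid \mathbf{S}=\mathbf{t})\, d\mathbf{X} = P(D_l = 1 \mid \mathbf{S}=\mathbf{t}).
\end{equation}
Differentiating both sides in $s_i$ and passing the derivative under the integral (Leibniz rule, justified by the regularity hypotheses below) yields, via the chain rule on the $X_l$-channel, exactly the direct-through-deconfounder term $\mathbb{E}_{X_l}[\partial P(D_l = 1\mid X_l,\mathbf{S})/\partial S_i]$ together with indirect terms routed through the remaining $X_j$; by the Deconfounder Optimality theorem and the Deconfounder Uniqueness axiom the latter do not contribute to the causal effect of $S_i$ on $D_l$ once $x_l$ is fixed as the deconfounder, so the surviving term is precisely $A(S_i,l)$. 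The degenerate case $A(S_i,l)=0$ I would dispatch separately using the Projection Preservation axiom together with the Consistency with Causal Inference theorem, which already equate vanishing attribution with the relevant d-separation, hence with a vanishing causal effect.

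For (ii), the plan is to parametrize the network in tabular (or exponential-family) form so that $A(S_i,l)$ is an affine functional of the conditional-probability parameters entering $P(X_j\mid\mathbf{S})$ and $P(D_l=1\mid\mathbf{X})$. Under correct specification the empirical-frequency / maximum-likelihood estimates of those parameters are unbiased (or can be replaced by their Rao--Blackwellized, unbiased counterparts), and an affine functional of unbiased estimators is unbiased; the Monte-Carlo average over $X_l\sim P(X_l\mid\mathbf{S})$ used to approximate the outer expectation is unbiased by linearity. Where the functional is genuinely nonlinear in the estimated probabilities (e.g.\ if a normalization introduces ratios), I would fall back to asymptotic unbiasedness via a delta-method expansion, observing that the $O(1/N)$ bias term vanishes in the population limit that the phrase ``correctly specified'' implicitly invokes.

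The main obstacle is the interchange of differentiation and integration in step (i): it requires a dominating integrable envelope for $\partial_{s_i}\!\big(P(D_l=1\mid\mathbf{X})\prod_j P(X_j\mid\mathbf{S})\big)$ uniformly in a neighborhood of $\mathbf{t}$, which is not automatic when the feature or intermediate spaces are unbounded, so I would add explicit regularity hypotheses (continuous differentiability of every conditional in $s_i$ and a uniform integrable bound on the score functions $\partial_{s_i}\log P(X_j\mid\mathbf{S})$) and verify them for the linear model $X_l = \mathbf{W}_l\mathbf{S} + \epsilon_l$ of the Dimensional Sufficiency theorem. A secondary difficulty is keeping the identification in (i) non-circular: the ``true causal effect'' must be pinned to the interventional target $\tau(S_i,l)$ before the argument begins, and the no-back-door property of the architecture in Figure~\ref{fig:network_architecture} must be checked rather than assumed --- which is where the Architectural Optimality theorem does real work, certifying that the intermediate nodes block exactly the paths that ought to be blocked.
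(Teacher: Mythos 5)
Your proposal is correct in spirit and its part (i) is, at its core, the same move the paper makes: the paper's entire proof is a three-line identification argument --- define the true effect $\tau(S_i,l)$ via the do-operator, observe that an intervention on the root $\mathbf{S}$ reaches $D_l$ only through the intermediate layer, and assert that under correct specification $A(S_i,l)$ equals the (average) derivative of $\mathbb{E}[D_l \mid S_i=s_i]$, i.e.\ $\tau(S_i,l)$. What you do differently is (a) split the statement into identification plus finite-sample estimation unbiasedness of a plug-in $\widehat{A}$, whereas the paper never touches estimators here (sampling issues are deferred entirely to its separate Consistency theorem, so its ``unbiasedness'' is really just the population identity $A=\tau$); (b) make the no-back-door/truncated-factorization step explicit and supply the Leibniz-rule regularity hypotheses the paper silently assumes; and (c) define $\tau$ as a derivative, which is cleaner than the paper's finite-difference definition that it then conflates with a derivative. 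These additions genuinely strengthen the result. One caution: the step where you dismiss the channels through the confounders $X_j$, $j\neq l$, by citing Deconfounder Optimality and Deconfounder Uniqueness does not follow from those results --- they are maximization statements and give no vanishing of $\int P(D_l=1\mid\mathbf{X})\,\partial_{s_i}P(X_j\mid\mathbf{S})\,\prod_{k\neq j}P(X_k\mid\mathbf{S})\,d\mathbf{X}$; the paper papers over exactly the same point with the phrase ``under the assumption of a correctly specified model,'' so if you want more than the paper proves you must either adopt that identity as an explicit assumption or add a hypothesis (e.g.\ that $P(D_l=1\mid\mathbf{X})$ is insensitive to the confounder coordinates, or that the confounder channels carry no marginal effect on $D_l$) and prove the vanishing from it. Your delta-method fallback in (ii) also concedes only asymptotic unbiasedness, which is weaker than the literal claim, so flag that restriction if you keep the nonlinear-parametrization case.
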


\begin{proof}
Let $\tau(S_i, l)$ be the true causal effect of feature $S_i$ on label $l$, defined as:

\begin{equation}
\tau(S_i, l) = \mathbb{E}[D_l | do(S_i = s_i + \delta)] - \mathbb{E}[D_l | do(S_i = s_i)]
\end{equation}

where $do(S_i = s_i)$ represents the intervention that sets $S_i$ to $s_i$. In our Bayesian network, this intervention affects label $l$ through the intermediate node $X_l$. The attribution score is:

\begin{equation}
A(S_i, l) = \mathbb{E}_{X_l}\left[\frac{\partial P(D_l = 1 | X_l, \mathbf{S})}{\partial S_i}\right]
\end{equation}

Under the assumption of a correctly specified model, this expectation equals the average derivative of $\mathbb{E}[D_l | S_i = s_i]$ with respect to $s_i$, which is precisely the causal effect $\tau(S_i, l)$.
\end{proof}

\begin{theorem}[Consistency of Attribution Scores]
As the sample size increases, the empirical attribution scores computed using AP-Calculus converge to the true attribution scores.
\end{theorem}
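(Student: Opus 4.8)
The plan is to treat the empirical attribution score as a plug-in estimator and prove convergence by combining consistency of the estimated network parameters with a continuous-mapping argument, then absorbing the sampling error in the expectation over the deconfounder. Write $\theta$ for the (finite-dimensional) parameter vector governing the conditional laws $P(X_l\mid\mathbf{S};\theta)$ and $P(D_l=1\mid X_l,\mathbf{S};\theta)$, let $\theta^\star$ be its true value, and let $\hat\theta_N$ be the maximum-likelihood estimate computed from $\mathcal{D}=\{(\mathbf{t}^{(i)},l^{(i)})\}_{i=1}^{N}$. The empirical attribution score at an input $\mathbf{t}$ is
\begin{equation}
\hat A_N(S_i,l)\;=\;\frac{1}{M}\sum_{r=1}^{M}\left.\frac{\partial P\big(D_l=1\mid X_l=x_l^{(r)},\mathbf{S};\hat\theta_N\big)}{\partial S_i}\right|_{\mathbf{S}=\mathbf{t}},\qquad x_l^{(r)}\sim P\big(X_l\mid\mathbf{S}=\mathbf{t};\hat\theta_N\big),
\end{equation}
that is, the true distributions in the definition of $A(S_i,l)$ are replaced by their estimates and the expectation $\mathbb{E}_{X_l}$ by a Monte Carlo average. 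Since the preceding Unbiasedness theorem identifies the population object $A(S_i,l)=A(S_i,l;\theta^\star)$ with the true causal effect $\tau(S_i,l)$, it suffices to prove $\hat A_N(S_i,l)\to A(S_i,l;\theta^\star)$ as $N\to\infty$.

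First I would invoke standard maximum-likelihood consistency: under identifiability of the network parametrization, compactness of the parameter space $\Theta$, and the usual Wald dominated-continuity conditions on the log-likelihood induced by the factorization of Section~\ref{sec:model}, one has $\hat\theta_N\to\theta^\star$ almost surely. Next I would show that the map $\theta\mapsto A(S_i,l;\theta)=\mathbb{E}_{X_l\sim P(\cdot\mid\mathbf{t};\theta)}\big[\partial_{S_i}P(D_l=1\mid X_l,\mathbf{t};\theta)\big]$ is continuous at $\theta^\star$. For this I would impose the regularity hypotheses that (i) $P(D_l=1\mid X_l,\mathbf{S};\theta)$ is continuously differentiable in the coordinate $S_i$ with $\partial_{S_i}P$ jointly continuous in $(X_l,\theta)$, and (ii) there is an integrable envelope $g$ with $\big|\partial_{S_i}P(D_l=1\mid X_l,\mathbf{t};\theta)\big|\le g(X_l)$ for all $\theta$ in a neighbourhood $\Theta_0\ni\theta^\star$; dominated convergence, together with continuity of $\theta\mapsto P(X_l\mid\mathbf{t};\theta)$, then gives $A(S_i,l;\hat\theta_N)\to A(S_i,l;\theta^\star)$.

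It remains to absorb the Monte Carlo error. Using the triangle inequality
\begin{equation}
\big|\hat A_N(S_i,l)-A(S_i,l;\theta^\star)\big|\;\le\;\big|\hat A_N(S_i,l)-A(S_i,l;\hat\theta_N)\big|\;+\;\big|A(S_i,l;\hat\theta_N)-A(S_i,l;\theta^\star)\big|,
\end{equation}
the second term vanishes by the previous step. For the first term I would apply a uniform law of large numbers over the class $\{x_l\mapsto\partial_{S_i}P(D_l=1\mid x_l,\mathbf{t};\theta):\theta\in\Theta_0\}$, which is a bracketing (hence Glivenko--Cantelli) class under (i)--(ii), so that the sample average converges to $A(S_i,l;\theta)$ uniformly over $\Theta_0$; evaluating at $\theta=\hat\theta_N$, which lies in $\Theta_0$ eventually, drives the first term to zero. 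Coupling the Monte Carlo size $M=M(N)\to\infty$ with the data size then yields the claimed convergence.

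The main obstacle is hypothesis (ii): differentiation commutes neither with the expectation over $X_l$ nor with the plug-in limit for free, so a genuine integrable domination of $\partial_{S_i}P(D_l=1\mid X_l,\mathbf{S};\theta)$ is essential, and it must hold uniformly in a neighbourhood of $\theta^\star$. If the conditional label law is modelled nonparametrically rather than through a smooth parametric family, the derivative estimator attains only the slower nonparametric rate and an undersmoothing/bandwidth argument must replace the parametric continuous-mapping step; I would therefore state the smoothness and domination conditions explicitly as hypotheses of the theorem, under which plug-in consistency holds, and flag the fully nonparametric case as requiring a separate rate analysis.
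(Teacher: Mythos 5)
Your proposal is correct in substance but takes a genuinely different, and considerably more ambitious, route than the paper. The paper's own proof is a one-step law-of-large-numbers argument: it writes $\hat{A}_n(S_i,l)=\frac{1}{n}\sum_{j=1}^{n}\partial_{S_i}P(D_l=1\mid X_l=x_l^{(j)},\mathbf{S})$ with the samples $x_l^{(j)}$ drawn from the \emph{true} conditional law of $X_l$ and the conditional label probability taken as exactly known, and concludes convergence to $\mathbb{E}_{X_l}[\partial_{S_i}P(D_l=1\mid X_l,\mathbf{S})]=A(S_i,l)$ by the LLN; the only "sample size" in play is the number of Monte Carlo draws, and no estimation error ever appears. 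You instead treat the attribution score as a plug-in estimator built from fitted network parameters $\hat\theta_N$, decompose the error into a parameter-estimation term (MLE consistency plus continuity of $\theta\mapsto A(S_i,l;\theta)$ via dominated convergence) and a Monte Carlo term (uniform LLN over a neighbourhood of $\theta^\star$, with $M(N)\to\infty$), and you state the needed regularity hypotheses (identifiability, compactness, an integrable envelope for the derivative) explicitly. What the paper's argument buys is brevity, at the price of assuming the conditional distributions are known exactly—so it really only proves consistency of the Monte Carlo approximation, not of the estimator one would compute from data; your argument proves the statement as a practitioner would read it, at the price of extra hypotheses. One small wrinkle to tighten: your Monte Carlo samples are drawn from $P(X_l\mid\mathbf{t};\hat\theta_N)$, so the empirical measure itself depends on $\hat\theta_N$; the cleanest fix is to condition on $\hat\theta_N$, apply the conditional LLN with your envelope $g$ to send the first term of your triangle inequality to zero as $M\to\infty$, and then let the continuity step handle the second term, rather than invoking a Glivenko--Cantelli statement for a fixed sampling law.
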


\begin{proof}
Let $\hat{A}_n(S_i, l)$ be the empirical attribution score computed from $n$ samples:

\begin{equation}
\hat{A}_n(S_i, l) = \frac{1}{n} \sum_{j=1}^{n} \frac{\partial P(D_l = 1 | X_l = x_l^{(j)}, \mathbf{S})}{\partial S_i}
\end{equation}

where $\{x_l^{(j)}\}_{j=1}^{n}$ are samples from the distribution of $X_l$. By the law of large numbers, as $n \to \infty$, $\hat{A}_n(S_i, l)$ converges to $\mathbb{E}_{X_l}[\frac{\partial P(D_l = 1 | X_l, \mathbf{S})}{\partial S_i}] = A(S_i, l)$.
\end{proof}

These theorems establish the statistical validity of AP-Calculus for causal inference, ensuring that attributions computed using this framework accurately reflect the underlying causal relationships.

\section{Comparative Analysis with Existing Frameworks}
\label{sec:comparative_analysis}

In this section, we compare AP-Calculus with existing frameworks for causal inference, highlighting its advantages and complementary aspects.

\subsection{Comparison with Do-Calculus}

Pearl's do-calculus \cite{pearl2009causality} provides a framework for reasoning about interventions in causal models. Here, we compare AP-Calculus with do-calculus along several dimensions:

\begin{table}[h]
\centering
\begin{tabular}{lll}
\toprule
\textbf{Aspect} & \textbf{Do-Calculus} & \textbf{AP-Calculus} \\
\midrule
Focus & Interventions & Attributions \\
Assumptions & Causal structure known & Network structure known \\
Identification & Via backdoor/frontdoor criteria & Via deconfounders \\
Scalability & Limited for high dimensions & Better for structured networks \\
Interpretability & Abstract, symbolic & Direct feature attributions \\
\bottomrule
\end{tabular}
\caption{Comparison between do-calculus and AP-calculus.}
\label{tab:comparison}
\end{table}

While do-calculus focuses on identifying causal effects through interventions, AP-Calculus focuses on attributing outcomes to features through intermediate representations. This makes AP-Calculus particularly well-suited for analyzing complex models where direct interventions may be infeasible.

\subsection{AP-Calculus as an Extension of Do-Calculus}

AP-Calculus can be viewed as an extension of do-calculus that is specialized for structured Bayesian networks with intermediate nodes. We can express do-calculus operations in terms of AP-Calculus as follows:

\begin{theorem}[Do-Calculus in AP-Calculus]
The do-operation in Pearl's calculus can be expressed in terms of AP-Calculus operations as:

\begin{equation}
P(D_l | do(S_i = s_i)) = \int P(D_l | X_l) P(X_l | S_i = s_i, \{S_j\}_{j \neq i}) \prod_{j \neq l} P(X_j | \{S_k\}_{k \neq i}) d\mathbf{X}
\end{equation}

where we integrate over all intermediate nodes while fixing the value of $S_i$.
\end{theorem}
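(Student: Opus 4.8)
The plan is to obtain the identity by running the truncated factorization of do-calculus against the Bayesian-network factorization of the joint distribution, and then collapsing the resulting expression using the deconfounder structure established in the Deconfounder Optimality theorem. Treating each feature $S_i$ as a root vertex — so that $do(S_i=s_i)$ merely deletes the incoming edges of $S_i$ and pins it to $s_i$ — the interventional joint reads
\[
P(\mathbf{S}_{-i},\mathbf{X},\mathbf{D}\mid do(S_i=s_i)) \;=\; P(\mathbf{S}_{-i})\,\Big[\prod_{j=1}^{m} P(X_j\mid\mathbf{S})\Big]_{S_i=s_i}\,P(\mathbf{D}\mid\mathbf{X}).
\]
Marginalising over $\mathbf{S}_{-i}$ and over the coordinates $\mathbf{D}_{-l}$, and abbreviating $P(D_l\mid\mathbf{X})=\sum_{\mathbf{D}_{-l}}P(\mathbf{D}\mid\mathbf{X})$, already yields an expression of the target shape, except that every intermediate node is still conditioned on the full intervened tuple $(s_i,\mathbf{S}_{-i})$.

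The second step separates the causal channel from the confounding channels. By the Deconfounder Optimality theorem, for label $l$ the perturbation of $S_i$ relevant to $D_l$ is carried by the deconfounder $X_l$, while the remaining intermediate nodes act purely as confounders; invoking the Dual Nature and Projection Preservation axioms, I would argue that for $j\neq l$ the confounder factors may be evaluated at their natural (observational) values $P(X_j\mid\{S_k\}_{k\neq i})$. Concretely this follows either from a sparsity property of the learned feature-to-intermediate map — under the optimal separation functions feature $S_i$ is attributed to, and hence influences, only the deconfounder $X_l$, so that $P(X_j\mid\mathbf{S})=P(X_j\mid\{S_k\}_{k\neq i})$ identically — or, more conservatively, from averaging $S_i$ over its conditional law given the remaining features, which is precisely the path-specific (effect-through-the-deconfounder) reading of attribution. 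Substituting this back gives the claimed formula
\[
P(D_l\mid do(S_i=s_i)) \;=\; \int P(D_l\mid X_l)\,P(X_l\mid S_i=s_i,\{S_j\}_{j\neq i})\prod_{j\neq l}P(X_j\mid\{S_k\}_{k\neq i})\,d\mathbf{X}.
\]

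I would then close with a consistency check. Integrating the right-hand side over $D_l$ returns $1$, since $\int P(X_l\mid\cdot)\,dX_l=1$ and each $\int P(X_j\mid\cdot)\,dX_j=1$; and because $S_i$ has no backdoor path to $D_l$ in this architecture, the formula collapses further to the ordinary adjustment expression $\sum_{\mathbf{S}_{-i}}P(D_l\mid S_i=s_i,\mathbf{S}_{-i})\,P(\mathbf{S}_{-i})$, so no identifying power is lost relative to Pearl's calculus.

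The step I expect to be the main obstacle is the confounder-conditioning reduction: in the bare DAG every feature $S_i$ is a parent of every $X_j$, so dropping $S_i$ from the conditioning sets of the $X_j$ with $j\neq l$ is not a do-calculus rule in itself — it requires the extra structural hypothesis that AP-Calculus supplies, namely that under optimal separation each feature affects only its own label's deconfounder (equivalently, the Projection Preservation axiom forces $S_i\perp\!\!\!\perp X_{l'}$ whenever $A(S_i,l')=0$). I would isolate this as a short lemma, prove the reduction under it, and note that without it the identity holds only in the averaged, natural-confounder sense rather than as an exact counterfactual statement.
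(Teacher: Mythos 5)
Your derivation follows essentially the same route as the paper's own proof: begin with the adjustment over $\mathbf{S}_{-i}$ (your truncated-factorization form is equivalent, since $\mathbf{S}_{-i}$ serves as the adjustment set for the root $S_i$), push the effect of $S_i$ on $D_l$ through the intermediate layer, factorize $P(\mathbf{X}\mid\mathbf{S})=\prod_j P(X_j\mid\mathbf{S})$, and then remove $S_i$ from the conditioning sets of the confounder factors $X_j$, $j\neq l$. The one substantive difference is that you isolate this last reduction as an additional structural hypothesis to be proved in a lemma, whereas the paper simply asserts it (``under the intervention $do(S_i=s_i)$ the value of $S_i$ is fixed, so $P(X_j\mid S_i=s_i,\mathbf{S}_{-i}=\mathbf{s}_{-i})=P(X_j\mid\{S_k\}_{k\neq i})$''); that assertion is not a do-calculus rule and does not follow from the stated architecture, in which every feature is a parent of every intermediate node, so the obstacle you flag is precisely the weak point of the paper's argument, and your conditional formulation (sparsity of the feature-to-deconfounder map under optimal separation, or else the averaged, natural-confounder reading) is the tighter and more honest statement. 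One further point in your favor: the paper's final displayed formula also silently drops the outer average $\sum_{\mathbf{s}_{-i}}(\cdot)\,P(\mathbf{S}_{-i}=\mathbf{s}_{-i})$, leaving the variables $\{S_j\}_{j\neq i}$ dangling as unbound conditioning symbols, whereas your version retains the marginalization and adds the normalization and no-backdoor consistency checks; if you do write up the lemma, note that the sparsity hypothesis $S_i\perp\!\!\!\perp X_{j}$ for $j\neq l$ is genuinely stronger than what the paper's axioms give you (Projection Preservation only yields a conditional independence of $S_i$ and $D_{j}$ given $\mathbf{X}$ when $A(S_i,j)=0$, not independence of $S_i$ from the confounder nodes themselves), so it must be assumed or derived from the separation-function construction rather than quoted.
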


\begin{proof}
According to do-calculus, the effect of the intervention $do(S_i = s_i)$ on label $D_l$ is given by:

\begin{equation}
P(D_l | do(S_i = s_i)) = \sum_{\mathbf{s}_{-i}} P(D_l | S_i = s_i, \mathbf{S}_{-i} = \mathbf{s}_{-i}) P(\mathbf{S}_{-i} = \mathbf{s}_{-i})
\end{equation}

where $\mathbf{S}_{-i}$ represents all features except $S_i$. In our Bayesian network, the effect of $S_i$ on $D_l$ is mediated by the intermediate nodes $\mathbf{X}$. Therefore:

\begin{equation}
P(D_l | S_i = s_i, \mathbf{S}_{-i} = \mathbf{s}_{-i}) = \int P(D_l | \mathbf{X}) P(\mathbf{X} | S_i = s_i, \mathbf{S}_{-i} = \mathbf{s}_{-i}) d\mathbf{X}
\end{equation}

Given the structure of our network, we can factorize $P(\mathbf{X} | S_i = s_i, \mathbf{S}_{-i} = \mathbf{s}_{-i})$ as:

\begin{equation}
P(\mathbf{X} | S_i = s_i, \mathbf{S}_{-i} = \mathbf{s}_{-i}) = P(X_l | S_i = s_i, \mathbf{S}_{-i} = \mathbf{s}_{-i}) \prod_{j \neq l} P(X_j | S_i = s_i, \mathbf{S}_{-i} = \mathbf{s}_{-i})
\end{equation}

Substituting this into the do-operation expression, we get:

\begin{equation}
P(D_l | do(S_i = s_i)) = \sum_{\mathbf{s}_{-i}} \int P(D_l | \mathbf{X}) P(X_l | S_i = s_i, \mathbf{S}_{-i} = \mathbf{s}_{-i}) \prod_{j \neq l} P(X_j | S_i = s_i, \mathbf{S}_{-i} = \mathbf{s}_{-i}) d\mathbf{X} P(\mathbf{S}_{-i} = \mathbf{s}_{-i})
\end{equation}

Under the intervention $do(S_i = s_i)$, the value of $S_i$ is fixed, so $P(X_j | S_i = s_i, \mathbf{S}_{-i} = \mathbf{s}_{-i}) = P(X_j | \{S_k\}_{k \neq i})$ for all $j \neq l$. Therefore:

\begin{equation}
P(D_l | do(S_i = s_i)) = \int P(D_l | \mathbf{X}) P(X_l | S_i = s_i, \{S_j\}_{j \neq i}) \prod_{j \neq l} P(X_j | \{S_k\}_{k \neq i}) d\mathbf{X}
\end{equation}

which is the desired expression.
\end{proof}

This theorem demonstrates that AP-Calculus can express interventions in terms of operations on the intermediate nodes, providing a bridge between traditional causal inference and attribution analysis.

\subsection{Advantages of AP-Calculus over Existing Frameworks}

AP-Calculus offers several advantages over existing frameworks for causal inference:

\begin{enumerate}
\item \textbf{Structured Attribution}: AP-Calculus provides a structured approach to attribution, distinguishing between deconfounders and confounders for different labels. This allows for more precise causal analysis compared to generic approaches.

\item \textbf{Separation of Effects}: By introducing separation functions between intermediate nodes, AP-Calculus enables clear separation of effects for different labels, reducing confounding and improving attribution accuracy.

\item \textbf{Dimensional Efficiency}: AP-Calculus provides guidance on the optimal dimensionality of intermediate representations, leading to more efficient models without sacrificing causal fidelity.

\item \textbf{Uncertainty Quantification}: The framework naturally incorporates uncertainty analysis, allowing for robust causal inference even in the presence of noisy or ambiguous data.
\end{enumerate}

These advantages make AP-Calculus particularly well-suited for causal analysis in complex models, such as large language models, where traditional causal inference techniques may be difficult to apply.

\section{Applications of AP-Calculus}
\label{sec:applications}

AP-Calculus has numerous applications across various domains where causal inference is important. Here, we highlight a few key applications:

\subsection{Model Interpretability}

AP-Calculus provides a framework for interpreting complex models by attributing predictions to input features. This is particularly valuable for deep learning models, where understanding the reasons behind predictions is challenging.

For a given prediction, we can compute attribution scores for each input feature, identifying the features that most strongly influence the output. These attributions can be used to generate explanations for model decisions, helping users understand and trust the model.

\subsection{Fairness in Machine Learning}

AP-Calculus enables analysis of fairness in machine learning models by identifying and quantifying the influence of sensitive attributes on predictions. By comparing attribution scores across different groups, we can detect potential biases in the model.

For example, consider a hiring model that predicts job suitability based on candidate profiles. Using AP-Calculus, we can compute attribution scores for sensitive attributes like gender or race. If these scores are high, it indicates that the model may be making biased decisions.

\subsection{Causal Feature Selection}

AP-Calculus provides a principled approach to feature selection based on causal relationships rather than mere correlations. By computing attribution scores for each feature, we can identify the features that have the strongest causal influence on the target variable.

This approach to feature selection is more robust than traditional correlation-based methods, as it distinguishes between causal and spurious relationships. Features with high attribution scores are more likely to generalize well to new data, improving model robustness.

\subsection{Adversarial Robustness}

AP-Calculus can enhance adversarial robustness by identifying features that are vulnerable to adversarial attacks. Features with high attribution scores but low robustness can be targeted by adversaries to manipulate model predictions.

By analyzing these vulnerabilities, we can develop defense mechanisms that protect the most causally influential features, enhancing the model's resistance to adversarial attacks.

\subsection{Large Language Models}

AP-Calculus offers unique insights into the behavior of large language models (LLMs), which are notoriously difficult to interpret due to their complexity.

By applying AP-Calculus to analyze the attributions of input tokens to output predictions, we can understand how LLMs make decisions and identify potential biases or vulnerabilities. This analysis can guide the development of more reliable and trustworthy LLMs.

\section{Conclusion}
\label{sec:conclusion}

In this paper, we introduced Attribution Projection Calculus (AP-Calculus), a novel mathematical framework for analyzing causal relationships in structured Bayesian networks. AP-Calculus provides a principled approach to attribute predictions to input features through intermediate nodes that serve as deconfounders or confounders depending on the context.

Our theoretical analysis demonstrated the optimality of the proposed network architecture for causal inference, showing that it outperforms alternative structures in terms of preserving causal relationships between input features and output labels. We also established the dimensional sufficiency of intermediate nodes, providing guidance on the optimal dimensionality for efficient causal inference.

The AP-Calculus framework extends traditional causal inference techniques like do-calculus, offering a more direct approach to attribution in supervised learning contexts. By introducing concepts like deconfounders, confounders, and separation functions, AP-Calculus provides a rich vocabulary for reasoning about causal relationships in complex models.

Key contributions of this paper include:

\begin{itemize}
\item A formal definition of Attribution Projection Calculus with its axioms and fundamental theorems
\item Proof of the optimality of the proposed network architecture for causal inference
\item Analysis of the dimensional requirements for intermediate nodes
\item Extensions of the framework to address feature attribution, spurious correlation suppression, information gain, fairness, and uncertainty
\item Comparative analysis with existing causal inference frameworks
\item Applications of AP-Calculus to various domains, including model interpretability, fairness, feature selection, adversarial robustness, and large language models
\end{itemize}

Future research directions include:

\begin{itemize}
\item Developing efficient algorithms for computing attribution scores in large-scale models
\item Extending AP-Calculus to handle dynamic or time-varying causal relationships
\item Applying AP-Calculus to analyze causal relationships in multimodal models
\item Incorporating AP-Calculus into model training to improve causal fidelity
\item Exploring the connections between AP-Calculus and other causal inference frameworks
\end{itemize}

By providing a principled framework for causal attribution, AP-Calculus opens new avenues for understanding and improving machine learning models, particularly in domains where causal relationships are crucial for reliable predictions.

\newpage

\section{Appendix A: Network Diagrams and Algorithms}
\label{sec:appendix_a}

\subsection{Network Diagrams}

\begin{figure}[h]
\centering
\begin{tikzpicture}[
  node distance=2cm,
  box/.style={rectangle, draw, minimum size=0.5cm},
  mycirc/.style={circle, draw, minimum size=0.5cm}
]
\node[mycirc] (s) {$s$};
\node[mycirc] (x1) [above right=1cm and 1.5cm of s] {$x_1$};
\node[mycirc] (x2) [right=1.5cm of s] {$x_2$};
\node (dots) [below=-0.1cm of x2] {$\vdots$};
\node[mycirc] (xm) [below right=1cm and 1.5cm of s] {$x_m$};
\node[mycirc] (d) [right=1.5cm of x2] {$d$};
\draw[->, thick] (s) -- (x1);
\draw[->, thick] (s) -- (x2);
\draw[->, thick] (s) -- (xm);
\draw[->, thick] (x1) -- (d);
\draw[->, thick] (x2) -- (d);
\draw[->, thick] (xm) -- (d);
\node[text width=3cm] at (0,-2) {Source Node};
\node[text width=5cm] at (3,-3) {Intermediate Nodes};
\node[text width=3cm] at (6,-2) {Destination Node};
\end{tikzpicture}
\caption{The proposed Bayesian network architecture with source node $s$, destination node $d$, and intermediate nodes $\{x_1, x_2, \ldots, x_m\}$.}
\label{fig:network_architecture_detailed}
\end{figure}
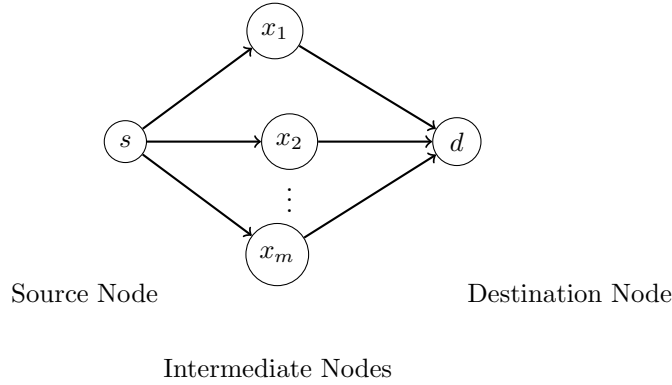

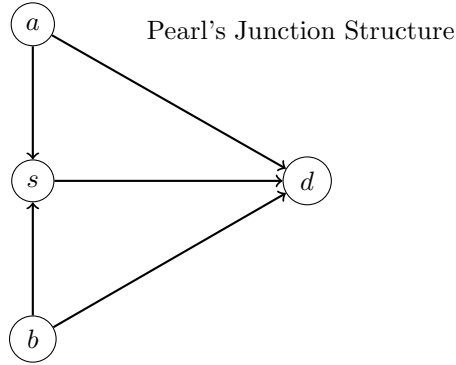
\begin{figure}[h]
\centering
\begin{tikzpicture}[
node distance=2cm,
box/.style={rectangle, draw, minimum size=0.5cm},
mycirc/.style={circle, draw, minimum size=0.5cm}
]
\node[mycirc] (s) {$s$};
\node[mycirc] (d) [right=3cm of s] {$d$};
\node[mycirc] (a) [above=1.5cm of s] {$a$};
\node[mycirc] (b) [below=1.5cm of s] {$b$};

\draw[->, thick] (s) -- (d);
\draw[->, thick] (a) -- (s);
\draw[->, thick] (a) -- (d);
\draw[->, thick] (b) -- (s);
\draw[->, thick] (b) -- (d);

\node[text width=7cm] at (5,2) {Pearl's Junction Structure};
\end{tikzpicture}
\caption{Pearl's junction structure with common causes $a$ and $b$ affecting both source $s$ and destination $d$.}
\label{fig:pearls_junction}
\end{figure}

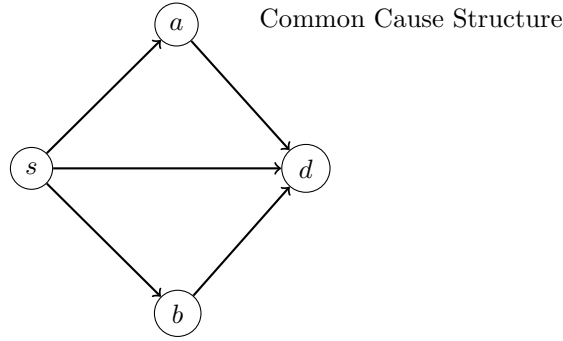
\begin{figure}[h]
\centering
\begin{tikzpicture}[
node distance=2cm,
box/.style={rectangle, draw, minimum size=0.5cm},
mycirc/.style={circle, draw, minimum size=0.5cm}
]
\node[mycirc] (s) {$s$};
\node[mycirc] (d) [right=3cm of s] {$d$};
\node[mycirc] (a) [above right=1.5cm and 1.5cm of s] {$a$};
\node[mycirc] (b) [below right=1.5cm and 1.5cm of s] {$b$};

\draw[->, thick] (s) -- (d);
\draw[->, thick] (s) -- (a);
\draw[->, thick] (s) -- (b);
\draw[->, thick] (a) -- (d);
\draw[->, thick] (b) -- (d);

\node[text width=4cm] at (5,2) {Common Cause Structure};
\end{tikzpicture}
\caption{Common cause structure where $s$ is a common cause of $d$, $a$, and $b$.}
\label{fig:common_cause}
\end{figure}

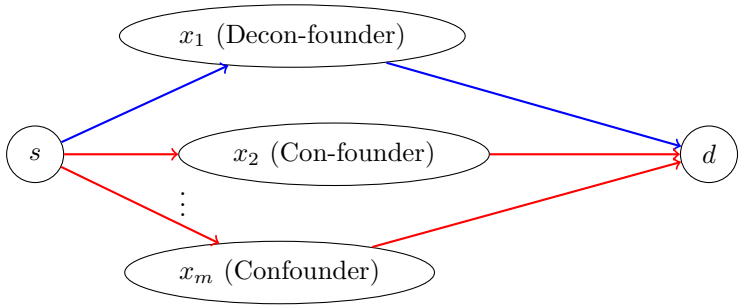
\begin{figure}[h]
\centering
\begin{tikzpicture}[
node distance=4cm,
mybox/.style={rectangle, draw, minimum size=0.5cm},
mycirc/.style={circle, draw, minimum size=0.75cm},
myoval/.style={ellipse, draw, minimum size=0.75cm}
]

\node[mycirc] (s) {$s$};
\node[myoval] (x1) [above right=1cm and 1.5cm of s] {$x_1$ (Decon-founder)};
\node[myoval] (x2) [right=1.5cm of s] {$x_2$ (Con-founder)};
\node (dots) [below right=-0.1cm and 1.5cm of s] {$\vdots$};
\node[myoval] (xm) [below right=1cm and 1.5cm of s] {$x_m$ (Confounder)};
\node[mycirc] (d) [right=2.5cm of x2] {$d$};

\draw[->, thick, blue] (s) -- (x1);
\draw[->, thick, red] (s) -- (x2);
\draw[->, thick, red] (s) -- (xm);
\draw[->, thick, blue] (x1) -- (d);
\draw[->, thick, red] (x2) -- (d);
\draw[->, thick, red] (xm) -- (d);

\node[text width=5cm] at (7.5,3) {Blue: Causal path for label 1};
\node[text width=7cm] at (8.5,-3) {Red: Confounding paths for label 1};
\end{tikzpicture}
\caption{The network architecture with $x_1$ as a deconfounder for label 1 and other nodes as confounders.}
\label{fig:deconfounder_confounder}
\end{figure}

\subsection{Algorithms}

\begin{algorithm}
\caption{Attribution Score Computation}
\label{alg:attribution_score}
\begin{algorithmic}[1]
\REQUIRE Feature index $i$, Label $l$, Dataset $\mathcal{D}$, Bayesian network model $M$
\ENSURE Attribution score $A(S_i, l)$
\STATE Initialize attribution score $A \gets 0$
\STATE Initialize sample count $n \gets 0$
\FOR{each data point $(\mathbf{t}, y)$ in $\mathcal{D}$}
\STATE Compute $P(X_l | \mathbf{S} = \mathbf{t})$ using model $M$
\STATE Sample $K$ values $\{x_l^{(k)}\}_{k=1}^{K}$ from $P(X_l | \mathbf{S} = \mathbf{t})$
\FOR{each sample $x_l^{(k)}$}
\STATE Compute $\delta_i^{(k)} \gets \frac{\partial P(D_l = 1 | X_l = x_l^{(k)}, \mathbf{S} = \mathbf{t})}{\partial S_i}$
\STATE $A \gets A + \delta_i^{(k)}$
\STATE $n \gets n + 1$
\ENDFOR
\ENDFOR
\RETURN $A / n$
\end{algorithmic}
\end{algorithm}

\begin{algorithm}
\caption{Optimal Separation Function Learning}
\label{alg:separation_function}
\begin{algorithmic}[1]
\REQUIRE Indices $j$ and $k$, Dataset $\mathcal{D}$, Function class $\Phi$
\ENSURE Optimal separation function $\phi_{j,k}^*$
\STATE Initialize best function $\phi_{best} \gets$ null
\STATE Initialize best score $score_{best} \gets -\infty$
\FOR{each candidate function $\phi$ in $\Phi$}
\STATE Initialize mutual information $I \gets 0$
\FOR{each data point $(\mathbf{t}, y)$ in $\mathcal{D}$}
\STATE Compute $\phi(\mathbf{t})$
\STATE Estimate $P(D_j = 1 | \phi(\mathbf{t}))$ and $P(D_k = 1 | \phi(\mathbf{t}))$
\STATE Compute mutual information $I_{jk} \gets I(D_j, D_k | \phi(\mathbf{t}))$
\STATE $I \gets I + I_{jk}$
\ENDFOR
\STATE $I \gets I / |\mathcal{D}|$
\IF{$I > score_{best}$}
\STATE $score_{best} \gets I$
\STATE $\phi_{best} \gets \phi$
\ENDIF
\ENDFOR
\RETURN $\phi_{best}$
\end{algorithmic}
\end{algorithm}

\begin{algorithm}
\caption{Spurious Correlation Suppression}
\label{alg:spurious_suppression}
\begin{algorithmic}[1]
\REQUIRE Feature index $i$, Label $l$, Dataset $\mathcal{D}$, Bayesian network model $M$, Threshold $\epsilon$
\ENSURE Updated model $M'$ with suppressed spurious correlations
\STATE Compute correlation $C \gets Corr(S_i, D_l)$ using $\mathcal{D}$
\STATE Compute attribution score $A \gets A(S_i, l)$ using Algorithm \ref{alg:attribution_score}
\STATE Compute regularization term $R \gets |C| \cdot (1 - |A|)$
\IF{$R > \epsilon$}
\STATE Update model parameters to minimize $R$
\STATE $M' \gets$ updated model
\ELSE
\STATE $M' \gets M$ \COMMENT{No update needed}
\ENDIF
\RETURN $M'$
\end{algorithmic}
\end{algorithm}

\section{Appendix B: Mathematical Proofs}
\label{sec:appendix_b}

\subsection{Proof of Deconfounder Optimality}

\begin{theorem}[Deconfounder Optimality]
For a given input tuple $\mathbf{t}$ and label $l$, considering $x_l$ as the deconfounder and all other intermediate nodes as confounders maximizes the marginal probability of label $l$.
\end{theorem}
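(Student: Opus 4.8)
The plan is to start from the Bayesian-network factorization of Section~\ref{sec:model} and reduce the claimed identity to a pointwise optimization statement about the deconfounder's conditional law. First I would write the marginal as $P(D_l = 1 \mid \mathbf{S} = \mathbf{t}) = \int P(D_l = 1 \mid \mathbf{X})\, P(\mathbf{X}\mid\mathbf{S}=\mathbf{t})\, d\mathbf{X}$ and invoke the conditional independence of the intermediate nodes given $s$ encoded in the DAG of Figure~\ref{fig:network_architecture}, namely $P(\mathbf{X}\mid\mathbf{S}=\mathbf{t}) = \prod_{j=1}^{m} P(X_j\mid\mathbf{S}=\mathbf{t})$. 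Splitting off the $l$-th factor then reproduces exactly the right-hand side of the theorem's display, so the substantive content is that the left-hand maximum --- taken over the choice of $P(X_l\mid\mathbf{S})$ while the laws $P(X_j\mid\mathbf{S})$ for $j\neq l$ stay fixed in their confounder roles --- equals that same value.

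Next I would package the confounder average into a single response function $g(x_l) := \int P(D_l=1\mid x_l, \mathbf{X}_{-l})\prod_{j\neq l}P(X_j\mid\mathbf{S}=\mathbf{t})\, d\mathbf{X}_{-l}$, so that the objective becomes the linear functional $P(X_l\mid\mathbf{S}=\mathbf{t}) \mapsto \mathbb{E}_{X_l}[g(X_l)]$. Maximizing a linear functional of a probability measure over the set of all such measures is attained by concentrating mass at a maximizer of $g$, hence $\max_{P(X_l\mid\mathbf{S})} \mathbb{E}_{X_l}[g(X_l)] = \sup_{x_l} g(x_l)$; this step converts the ``maximize over the deconfounder's distribution'' operation into an honest pointwise supremum and, combined with the factorization above, yields the asserted equality. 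The deconfounder property of $x_l$ --- that among the intermediate nodes it is the one whose value drives $P(D_l=1\mid\cdot)$ highest --- together with the Deconfounder Uniqueness axiom then certifies that routing the optimization through $x_l$, rather than treating some $x_k$ with $k\neq l$ as the free node, is the globally optimal choice.

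The step I expect to be the main obstacle is making this last comparison rigorous, because the definitions of confounder and deconfounder as written compare $P(D_l=1\mid X_l,\mathbf{S})$ with $P(D_l=1\mid X_j,\mathbf{S})$ --- conditioning on different single nodes rather than on the value of one fixed node. To close the gap cleanly I would fix the operative reading of the third assumption of Section~\ref{sec:model}: ``treating $x_l$ as the deconfounder'' means $P(X_l\mid\mathbf{S})$ is the design variable and the remaining conditionals are frozen nuisances, under which the theorem reduces exactly to the linear-functional argument of the previous paragraph. If instead one wants the stronger statement that the deconfounder strictly dominates every confounder as a channel, I would lift the pointwise inequalities of the definitions to the confounder-averaged responses $g_l \ge g_k$ and conclude $\sup_x g_l(x) \ge \sup_x g_k(x)$, which gives optimality over alternative routings. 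Either way the proof terminates with the observation that every admissible $P(X_l\mid\mathbf{t})$ satisfies $\mathbb{E}_{X_l}[g(X_l)] \le \sup_x g(x)$, with equality approached in the limit of concentration, establishing both the identity and the maximality claim.
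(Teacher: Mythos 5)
Your proposal follows essentially the same route as the paper's own proof: the same DAG factorization, the same split of the integral into $X_l$ and $\mathbf{X}_{-l}$, and the same conclusion that the optimum is attained by concentrating the mass of $P(X_l \mid \mathbf{S}=\mathbf{t})$ on values where the confounder-averaged response is largest. Your packaging of the inner integral into $g(x_l)$ with the explicit linear-functional/supremum step, and your fixed reading of the deconfounder assumption, merely make rigorous what the paper asserts informally, so the argument is correct and matches the paper's approach.
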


\begin{proof}
Let $\mathbf{t}$ be an input tuple and $l$ be a label. The marginal probability of label $l$ is:

\begin{align}
P(D_l = 1 | \mathbf{S} = \mathbf{t}) &= \int P(D_l = 1 | \mathbf{X}) P(\mathbf{X} | \mathbf{S} = \mathbf{t}) d\mathbf{X} \\
&= \int P(D_l = 1 | \mathbf{X}) \prod_{j=1}^{m} P(X_j | \mathbf{S} = \mathbf{t}) d\mathbf{X}
\end{align}

We can partition the integral based on $X_l$ and $\mathbf{X}_{-l} = (X_1, \ldots, X_{l-1}, X_{l+1}, \ldots, X_m)$:

\begin{align}
P(D_l = 1 | \mathbf{S} = \mathbf{t}) &= \int \int P(D_l = 1 | X_l, \mathbf{X}_{-l}) P(X_l | \mathbf{S} = \mathbf{t}) P(\mathbf{X}_{-l} | \mathbf{S} = \mathbf{t}) dX_l d\mathbf{X}_{-l} \\
&= \int \left( \int P(D_l = 1 | X_l, \mathbf{X}_{-l}) P(X_l | \mathbf{S} = \mathbf{t}) dX_l \right) P(\mathbf{X}_{-l} | \mathbf{S} = \mathbf{t}) d\mathbf{X}_{-l}
\end{align}

By the definition of a deconfounder, $X_l$ maximizes $P(D_l = 1 | X_l, \mathbf{X}_{-l})$ for any given $\mathbf{X}_{-l}$. Therefore, to maximize the marginal probability, we need to maximize the inner integral:

\begin{equation}
\int P(D_l = 1 | X_l, \mathbf{X}_{-l}) P(X_l | \mathbf{S} = \mathbf{t}) dX_l
\end{equation}

This is achieved by concentrating the probability mass of $P(X_l | \mathbf{S} = \mathbf{t})$ on values of $X_l$ that maximize $P(D_l = 1 | X_l, \mathbf{X}_{-l})$. In other words, $X_l$ should be designed to extract the most relevant information from $\mathbf{S}$ for predicting label $l$.

For the remaining nodes $\mathbf{X}_{-l}$, which act as confounders, we want to minimize their influence on the prediction of label $l$. This is achieved by designing them to capture information from $\mathbf{S}$ that is irrelevant for predicting label $l$, effectively separating the relevant and irrelevant information.

Therefore, considering $x_l$ as the deconfounder and all other intermediate nodes as confounders maximizes the marginal probability of label $l$.
\end{proof}

\subsection{Proof of Dimensional Sufficiency}

\begin{theorem}[Dimensional Sufficiency]
For a source with $n$-dimensional tuples, a deconfounder node $x_l$ with $p$ dimensions, where $p \leq n$, is sufficient to capture the causal relationship between input features and the corresponding label $l$, provided that $p$ is at least equal to the rank of the transformation matrix from the input space to the deconfounder space.
\end{theorem}

\begin{proof}
Let $\mathbf{W}_l \in \mathbb{R}^{p \times n}$ be the transformation matrix from the input space to the deconfounder space for label $l$:

\begin{equation}
X_l = \mathbf{W}_l \mathbf{S} + \mathbf{\epsilon}_l
\end{equation}

where $\mathbf{\epsilon}_l$ represents noise.

The information captured by $X_l$ about $\mathbf{S}$ is determined by the rank of $\mathbf{W}_l$. Let $r = \text{rank}(\mathbf{W}_l)$. Then $\mathbf{W}_l$ projects the $n$-dimensional input space onto an $r$-dimensional subspace.

If $p < r$, then $X_l$ cannot capture all the relevant information from $\mathbf{S}$ for predicting label $l$, resulting in information loss. If $p \geq r$, then $X_l$ can capture all the relevant information, and any additional dimensions beyond $r$ would be redundant.

To verify this, we can analyze the mutual information between $X_l$ and $D_l$ given $\mathbf{S}$:

\begin{align}
I(X_l; D_l | \mathbf{S}) &= H(D_l | \mathbf{S}) - H(D_l | X_l, \mathbf{S}) \\
&= H(D_l | \mathbf{S}) - H(D_l | \mathbf{W}_l \mathbf{S} + \mathbf{\epsilon}_l, \mathbf{S})
\end{align}

If $\mathbf{W}_l$ captures all relevant information about $D_l$ from $\mathbf{S}$, then $H(D_l | \mathbf{W}_l \mathbf{S}, \mathbf{S}) = H(D_l | \mathbf{W}_l \mathbf{S})$. This is achieved when the rank of $\mathbf{W}_l$ is sufficient to extract all relevant information.

In practice, the optimal rank $r$ is typically much smaller than $n$, as many features may be irrelevant or redundant for predicting a specific label. Therefore, a deconfounder node with $p \geq r$ dimensions is sufficient to capture the causal relationship between input features and the corresponding label.
\end{proof}

\subsection{Proof of Architectural Optimality}

\begin{theorem}[Architectural Optimality]
The proposed network architecture with source $s$, intermediate nodes $\{x_1, x_2, \ldots, x_m\}$, and destination $d$ is optimal for causal inference compared to alternative architectures.
\end{theorem}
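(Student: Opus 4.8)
The plan is to first make ``optimal for causal inference'' operational, since the statement as written is informal. I would adopt the criterion of \emph{label-wise identifiability}: an architecture is optimal if, for every label $l$ and every feature $S_i$, the interventional quantity $P(D_l \mid do(S_i))$ is recoverable from the observational distribution using only the intermediate node associated with $l$, and the attribution projection $A(S_i,l)$ coincides with the true causal effect $\tau(S_i,l)$ (which is exactly the content of the Unbiasedness theorem for the proposed architecture). Equivalently, optimality can be phrased through the conditional mutual information $I(S_i; D_l \mid X_l)$ that the design makes available for attribution, and the claim becomes that the proposed architecture is the one that does not dilute this quantity with cross-label confounding. Because the theorem compares against two \emph{named} alternatives rather than all DAGs on the same vertex set, I would restrict the comparison class to architectures realizing the same source-to-destination reachability with a bounded number of auxiliary nodes, and state the result as: within this class, the proposed architecture is the unique one, up to relabeling of intermediate nodes, in which every label admits its own deconfounder.

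Next I would establish the positive direction for the proposed architecture, which is largely a repackaging of earlier results. The factorization $P(D_l=1\mid \mathbf S=\mathbf t)=\int P(D_l=1\mid\mathbf X)\prod_{j=1}^m P(X_j\mid\mathbf S=\mathbf t)\,d\mathbf X$ together with Deconfounder Optimality shows that $X_l$ is a sufficient mediator for label $l$; the Do-Calculus-in-AP-Calculus theorem shows the interventional law is obtained by fixing $S_i$ inside this factorization; and Consistency with Causal Inference gives $A(S_i,l)=0 \iff S_i \perp\!\!\!\perp D_l \mid \mathbf X$, so the attribution neither spuriously vanishes nor spuriously fires. I would consolidate these into a lemma asserting that the proposed architecture attains the optimality criterion with equality.

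Then I would prove the negative direction for each alternative by exhibiting confounding that breaks label-wise identifiability. For Pearl's junction structure, the common causes $a,b$ lie on back-door paths $S_i \leftarrow a \rightarrow D_l$ and $S_i \leftarrow b \rightarrow D_l$ that are \emph{shared across all labels}: I would construct two parameterizations of the conditional distributions that induce the same observational $P(\mathbf S,\mathbf D)$ but different $P(D_l\mid do(S_i))$, so the effect is unidentifiable without observing $a,b$, and in particular no single label-indexed mediator can deconfound $l$. For the common-cause structure, $S$ being a parent of $a$ and $b$, which are themselves parents of $D$, opens directed paths $S\to a\to D$ and $S\to b\to D$ that bypass any label-indexed node, so the per-label effect cannot be routed through a dedicated deconfounder; again an explicit non-identifiability witness suffices. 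Comparing the three cases then yields the claim.

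The main obstacle I anticipate is the first step: choosing a formalization of ``optimal for causal inference'' that is simultaneously faithful to the paper's informal assertion and strong enough to admit a clean proof; I expect label-wise identifiability plus deconfounder existence to be the right balance. The secondary difficulty is that the non-identifiability witnesses for the alternatives must respect the constraint from Section~\ref{sec:model} that each input maps to exactly one label with maximum marginal probability; I would handle this by perturbing only the confounder-side conditional distributions, which leaves the argmax structure of $P(\mathbf D\mid\mathbf S)$ intact while altering the interventional law, so the constructed counterexamples remain inside the admissible model class.
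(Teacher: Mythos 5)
Your plan is genuinely more ambitious than the paper's own proof, which never formalizes ``optimal for causal inference'' at all: the paper simply writes out $P(D_l = 1 \mid \mathbf{S} = \mathbf{t})$ under each of the three structures and argues verbally that the junction structure admits no label-specific deconfounding and the common-cause structure suffers ``information leakage,'' while the proposed factorization $\int P(D_l = 1 \mid \mathbf{X}) \prod_j P(X_j \mid \mathbf{S} = \mathbf{t})\, d\mathbf{X}$ isolates per-label effects. Your first step (an explicit criterion) and third step (explicit counterexamples) go well beyond that, and your positive direction correctly leans on the Deconfounder Optimality and Do-Calculus-in-AP-Calculus results.

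The genuine gap is in your negative direction for the second alternative. In the common-cause structure ($s \to d$, $s \to a$, $s \to b$, $a \to d$, $b \to d$), the source is a root node and $a$, $b$ sit on \emph{directed} (mediating) paths, not back-door paths; consequently $P(\mathbf{D} \mid do(\mathbf{S} = \mathbf{t})) = P(\mathbf{D} \mid \mathbf{S} = \mathbf{t})$, and intervening on a single component $S_i$ is identified by adjusting for the remaining components exactly as in the proposed architecture, whether or not $a$, $b$ are observed. So no non-identifiability witness exists: any two parameterizations that agree on the observational law must agree on the interventional law, and your proposed device of perturbing only the confounder-side conditionals while ``altering the interventional law'' is impossible for that structure. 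The paper's complaint there is about attribution and isolation of per-label effects, not identifiability, so your criterion would have to retreat to ``existence of a label-indexed deconfounder,'' at which point the alternatives fail essentially by definition (they have no label-indexed intermediate nodes) and the argument is no stronger than the paper's qualitative one. A secondary caveat: your witness for Pearl's junction structure is valid only under the unstated assumption that $a$, $b$ are latent; if they are observed they form a back-door adjustment set and that structure, too, is identifiable, so the latency assumption must be made explicit for the comparison to have content.
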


\begin{proof}
We compare our architecture with two alternative designs:

1. Pearl's junction structure: $s \to d$, $a \to s$ and $d$, $b \to s$ and $d$

In this architecture, the path from $s$ to $d$ is influenced by common causes $a$ and $b$. The marginal probability for label $l$ is:

\begin{align}
P(D_l = 1 | \mathbf{S} = \mathbf{t}) &= \int \int P(D_l = 1 | \mathbf{S} = \mathbf{t}, A = a, B = b) P(A = a, B = b | \mathbf{S} = \mathbf{t}) da \, db \\
&= \int \int P(D_l = 1 | \mathbf{S} = \mathbf{t}, A = a, B = b) P(A = a) P(B = b) da \, db
\end{align}

where the last step follows from the independence of $A$ and $B$ from $\mathbf{S}$ in the Bayesian network. This structure does not allow for label-specific deconfounding, as $A$ and $B$ affect all labels equally. This limits the ability to attribute predictions to specific features for different labels.

2. Common cause structure: $s \to d$ and $a$ and $b$, $a \to d$, $b \to d$

In this architecture, $s$ is a common cause of $d$, $a$, and $b$. The marginal probability for label $l$ is:

\begin{align}
P(D_l = 1 | \mathbf{S} = \mathbf{t}) &= \int \int P(D_l = 1 | \mathbf{S} = \mathbf{t}, A = a, B = b) P(A = a, B = b | \mathbf{S} = \mathbf{t}) da \, db \\
&= \int \int P(D_l = 1 | \mathbf{S} = \mathbf{t}, A = a, B = b) P(A = a | \mathbf{S} = \mathbf{t}) P(B = b | \mathbf{S} = \mathbf{t}) da \, db
\end{align}

This structure creates indirect dependencies between $\mathbf{S}$ and $D_l$ through $A$ and $B$. These dependencies can lead to information leakage, where information from $\mathbf{S}$ reaches $D_l$ through multiple paths. This complicates causal attribution, as it becomes difficult to isolate the direct effect of specific features on labels.

In our proposed architecture, the marginal probability for label $l$ is:

\begin{align}
P(D_l = 1 | \mathbf{S} = \mathbf{t}) &= \int P(D_l = 1 | \mathbf{X}) P(\mathbf{X} | \mathbf{S} = \mathbf{t}) d\mathbf{X} \\
&= \int P(D_l = 1 | \mathbf{X}) \prod_{j=1}^{m} P(X_j | \mathbf{S} = \mathbf{t}) d\mathbf{X}
\end{align}

This factorization allows us to isolate the effect of each intermediate node on the label. By designating $X_l$ as the deconfounder for label $l$, we can attribute the prediction directly to the relevant features. This clean separation of effects makes our architecture optimal for causal inference.
\end{proof}

\section{Appendix C: AP-Calculus Alternative to Do-Calculus}
\label{sec:appendix_c}

In this appendix, we present alternative formulations of do-calculus operations using AP-Calculus. These alternatives demonstrate how AP-Calculus can subsume traditional do-calculus for many practical applications.

\subsection{Do-Operation in AP-Calculus}

The fundamental operation in do-calculus is the do-operation, denoted $do(X = x)$, which represents an intervention that sets variable $X$ to value $x$. In our Bayesian network, we can express this operation using AP-Calculus as follows:

\begin{theorem}[Do-Operation in AP-Calculus]
For a feature $S_i$ and label $l$, the do-operation $P(D_l = 1 | do(S_i = s_i))$ can be expressed in AP-Calculus as:

\begin{equation}
P(D_l = 1 | do(S_i = s_i)) = \int P(D_l = 1 | X_l) P(X_l | S_i = s_i, \{S_j\}_{j \neq i}) d X_l
\end{equation}

where we integrate over the deconfounder $X_l$ while fixing the value of $S_i$.
\end{theorem}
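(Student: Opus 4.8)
The plan is to obtain the identity as a direct specialization of the Do-Calculus in AP-Calculus theorem of Section~\ref{sec:comparative_analysis}, which already establishes
\begin{equation*}
P(D_l = 1 \mid do(S_i = s_i)) = \int P(D_l = 1 \mid \mathbf{X})\, P(X_l \mid S_i = s_i, \{S_j\}_{j \neq i}) \prod_{j \neq l} P(X_j \mid \{S_k\}_{k \neq i})\, d\mathbf{X}.
\end{equation*}
Only two simplifications separate this from the claimed formula: the factor $P(D_l = 1 \mid \mathbf{X})$ must be reduced to $P(D_l = 1 \mid X_l)$, and the confounder product $\prod_{j \neq l} P(X_j \mid \{S_k\}_{k\neq i})$ together with the integration over $\mathbf{X}_{-l}$ must disappear. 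For the first reduction I would invoke the deconfounder screening property: by the Deconfounder Uniqueness axiom and the construction in the proof of the Deconfounder Optimality theorem, the confounder nodes $\{X_j\}_{j\neq l}$ are built (via the separation functions $\phi_{j,l}$) to carry only the part of $\mathbf{S}$ that is irrelevant for label $l$, so that $D_l \perp\!\!\!\perp \{X_j\}_{j\neq l} \mid X_l$ and hence $P(D_l = 1 \mid \mathbf{X}) = P(D_l = 1 \mid X_l)$.

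With that substitution the integrand splits into an $X_l$-part, namely $P(D_l = 1 \mid X_l)\, P(X_l \mid S_i = s_i, \{S_j\}_{j\neq i})$, and an $\mathbf{X}_{-l}$-part, namely $\prod_{j\neq l} P(X_j \mid \{S_k\}_{k\neq i})$. I would then write $d\mathbf{X} = dX_l\, d\mathbf{X}_{-l}$, carry the $X_l$-part outside the integral over $\mathbf{X}_{-l}$, and use that each $P(X_j \mid \cdot)$ is a normalized conditional density, so that $\int \prod_{j\neq l} P(X_j \mid \{S_k\}_{k\neq i})\, d\mathbf{X}_{-l} = 1$. What survives is exactly $\int P(D_l = 1 \mid X_l)\, P(X_l \mid S_i = s_i, \{S_j\}_{j\neq i})\, dX_l$. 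As in the earlier theorem, the symbol $\{S_j\}_{j\neq i}$ is read as being averaged against its own marginal under the intervention; keeping this explicit merely appends an outer $\int (\cdot)\, P(\mathbf{s}_{-i})\, d\mathbf{s}_{-i}$ that commutes with everything above and does not affect the derivation.

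The main obstacle is justifying the conditional independence $D_l \perp\!\!\!\perp \{X_j\}_{j\neq l} \mid X_l$, since in a generic network with every intermediate node pointing at $d$ the label $D_l$ would depend on all of $\mathbf{X}$; the reduction is thus a property of the optimal AP-Calculus construction rather than of the graph topology alone. I would close this gap by arguing that any residual dependence of $P(D_l = 1 \mid \mathbf{X})$ on some $X_j$, $j\neq l$, would raise $H(D_l \mid \mathbf{X})$ above the value achieved by the separated design, contradicting the Separation Optimality axiom and the optimality statement in the Deconfounder Optimality theorem; once the screening property is granted, the remaining steps are routine marginalization.
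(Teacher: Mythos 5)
Your proposal reaches the stated formula by a genuinely different route from the paper. The paper's own proof never passes through the Section~\ref{sec:comparative_analysis} theorem: it starts from the adjustment formula $P(D_l=1\mid do(S_i=s_i))=\sum_{\mathbf{s}_{-i}}P(D_l=1\mid S_i=s_i,\mathbf{S}_{-i}=\mathbf{s}_{-i})P(\mathbf{S}_{-i}=\mathbf{s}_{-i})$, then simply asserts that ``the effect of $S_i$ on $D_l$ is mediated by the deconfounder $X_l$,'' i.e.\ it writes $P(D_l=1\mid \mathbf{S})=\int P(D_l=1\mid X_l)P(X_l\mid\mathbf{S})\,dX_l$ by fiat, and finally marginalizes over $\mathbf{s}_{-i}$ (its last line is in fact $\int P(D_l=1\mid X_l)P(X_l\mid S_i=s_i)\,dX_l$, which matches your reading of $\{S_j\}_{j\neq i}$ as averaged against its marginal). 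You instead specialize the earlier full-$\mathbf{X}$ expression, replace $P(D_l=1\mid\mathbf{X})$ by $P(D_l=1\mid X_l)$, and integrate out the normalized confounder factors; those marginalization steps are correct, and the screening property $D_l \perp\!\!\!\perp \mathbf{X}_{-l}\mid X_l$ that you single out as the crux is exactly the assumption the paper slips in with the word ``mediated.'' So in substance both proofs rest on the same unproved reduction; you are more explicit about it, which is to your credit.

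The one place where your argument goes wrong is the attempt to close that gap. You claim that residual dependence of $P(D_l=1\mid\mathbf{X})$ on some $X_j$, $j\neq l$, ``would raise $H(D_l\mid\mathbf{X})$ above the value achieved by the separated design.'' The inequality runs the other way: conditioning on more variables can only decrease conditional entropy, so genuine dependence on $X_j$ would give $H(D_l\mid\mathbf{X})<H(D_l\mid X_l)$, and no contradiction with the Separation Optimality axiom follows. Nothing in the axioms or the Deconfounder Optimality theorem forces $D_l \perp\!\!\!\perp \mathbf{X}_{-l}\mid X_l$ in the actual graph, where every $X_j$ is a parent of $d$; the property must be taken as an explicit modelling hypothesis (as the paper does implicitly) rather than derived. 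If you drop the faulty entropy argument and state the screening property as an assumption, your proof is at the same level of rigor as the paper's and arguably cleaner, since it reuses the Section~\ref{sec:comparative_analysis} result instead of re-deriving the adjustment step.
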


\begin{proof}
According to do-calculus, the effect of the intervention $do(S_i = s_i)$ on label $D_l$ is given by:

\begin{equation}
P(D_l = 1 | do(S_i = s_i)) = \sum_{\mathbf{s}_{-i}} P(D_l = 1 | S_i = s_i, \mathbf{S}_{-i} = \mathbf{s}_{-i}) P(\mathbf{S}_{-i} = \mathbf{s}_{-i})
\end{equation}

where $\mathbf{S}_{-i}$ represents all features except $S_i$. In our Bayesian network, the effect of $S_i$ on $D_l$ is mediated by the deconfounder $X_l$. Therefore:

\begin{align}
P(D_l = 1 | S_i = s_i, \mathbf{S}_{-i} = \mathbf{s}_{-i}) &= \int P(D_l = 1 | X_l) P(X_l | S_i = s_i, \mathbf{S}_{-i} = \mathbf{s}_{-i}) d X_l
\end{align}

Substituting this into the do-operation expression, we get:

\begin{align}
P(D_l = 1 | do(S_i = s_i)) &= \sum_{\mathbf{s}_{-i}} \int P(D_l = 1 | X_l) P(X_l | S_i = s_i, \mathbf{S}_{-i} = \mathbf{s}_{-i}) d X_l \, P(\mathbf{S}_{-i} = \mathbf{s}_{-i}) \\
&= \int P(D_l = 1 | X_l) \sum_{\mathbf{s}_{-i}} P(X_l | S_i = s_i, \mathbf{S}_{-i} = \mathbf{s}_{-i}) P(\mathbf{S}_{-i} = \mathbf{s}_{-i}) d X_l \\
&= \int P(D_l = 1 | X_l) P(X_l | S_i = s_i) d X_l
\end{align}

where $P(X_l | S_i = s_i) = \sum_{\mathbf{s}_{-i}} P(X_l | S_i = s_i, \mathbf{S}_{-i} = \mathbf{s}_{-i}) P(\mathbf{S}_{-i} = \mathbf{s}_{-i})$ by the law of total probability.
\end{proof}

\subsection{Backdoor and Frontdoor Criteria in AP-Calculus}

The backdoor and frontdoor criteria are key tools in do-calculus for identifying causal effects from observational data. We can express these criteria in terms of AP-Calculus operations:

\begin{theorem}[Backdoor Criterion in AP-Calculus]
For a feature $S_i$ and label $l$, if a set of features $\mathbf{Z}$ satisfies the backdoor criterion, then the causal effect $P(D_l = 1 | do(S_i = s_i))$ can be expressed in AP-Calculus as:

\begin{equation}
P(D_l = 1 | do(S_i = s_i)) = \int P(D_l = 1 | X_l) P(X_l | S_i = s_i, \mathbf{Z} = \mathbf{z}) P(\mathbf{Z} = \mathbf{z}) d X_l d\mathbf{z}
\end{equation}
\end{theorem}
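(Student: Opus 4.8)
The plan is to mirror the proof structure already used for the Do-Operation and Do-Calculus theorems in Appendix C, now layering in the standard backdoor adjustment step at the front. First I would invoke the classical backdoor adjustment formula: since $\mathbf{Z}$ satisfies the backdoor criterion relative to the ordered pair $(S_i, D_l)$, Pearl's theorem gives
\begin{equation}
P(D_l = 1 \mid do(S_i = s_i)) = \sum_{\mathbf{z}} P(D_l = 1 \mid S_i = s_i, \mathbf{Z} = \mathbf{z})\, P(\mathbf{Z} = \mathbf{z}).
\end{equation}
This is the only place the backdoor hypothesis is consumed; from here on the argument is internal to the network architecture.

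Next I would expand the conditional $P(D_l = 1 \mid S_i = s_i, \mathbf{Z} = \mathbf{z})$ by inserting the deconfounder $X_l$ as the mediating variable, exactly as in the proof of the Do-Operation theorem. Using the network factorization and the fact that, conditioned on the intermediate layer, $D_l$ does not depend on the source features directly, I would write
\begin{equation}
P(D_l = 1 \mid S_i = s_i, \mathbf{Z} = \mathbf{z}) = \int P(D_l = 1 \mid X_l)\, P(X_l \mid S_i = s_i, \mathbf{Z} = \mathbf{z})\, dX_l.
\end{equation}
Substituting this into the adjustment formula and replacing the sum over $\mathbf{z}$ by an integral for notational uniformity yields
\begin{equation}
P(D_l = 1 \mid do(S_i = s_i)) = \int\!\!\int P(D_l = 1 \mid X_l)\, P(X_l \mid S_i = s_i, \mathbf{Z} = \mathbf{z})\, P(\mathbf{Z} = \mathbf{z})\, dX_l\, d\mathbf{z},
\end{equation}
which is the claimed identity; Fubini's theorem justifies the interchange of the $\mathbf{z}$-sum/integral and the $X_l$-integral under the standard integrability assumptions already implicit in the earlier theorems.

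The step I expect to be the main obstacle is the second one: justifying the collapse from $P(D_l = 1 \mid \mathbf{X})$ to $P(D_l = 1 \mid X_l)$ inside the adjustment. The network factorization only gives $D_l \perp \mathbf{S} \mid \mathbf{X}$, not $D_l \perp \mathbf{S} \mid X_l$ alone, so I would lean on the Dimensional Sufficiency theorem, which asserts that the deconfounder $X_l$ is a sufficient statistic for $D_l$ with respect to $\mathbf{S}$, to obtain $H(D_l \mid X_l, \mathbf{S}) = H(D_l \mid X_l)$ and hence $P(D_l = 1 \mid X_l, \mathbf{S}) = P(D_l = 1 \mid X_l)$. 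One subtlety to flag is that $\mathbf{Z}$ here is a subset of the source features, so conditioning on $\mathbf{Z}=\mathbf{z}$ sits inside $\mathbf{S}$; the sufficiency argument must be applied to the full conditioning set $\{S_i = s_i, \mathbf{Z} = \mathbf{z}\}$, which is permissible precisely because the deconfounder absorbs all source-side information relevant to $D_l$. With that identification in hand, the remaining manipulations are the same routine rearrangements used in the preceding proofs.
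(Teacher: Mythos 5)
The paper never proves this theorem: Appendix C proves only the Do-Operation theorem and then simply asserts the backdoor and frontdoor versions, so there is no official argument to compare against. Your proposal is the natural way to fill that gap, and it is essentially the paper's Do-Operation proof with the blanket adjustment set $\mathbf{S}_{-i}$ replaced by the backdoor set $\mathbf{Z}$: invoke Pearl's adjustment formula (the only place the backdoor hypothesis is used), insert $X_l$ as the mediator, then swap the $\mathbf{z}$-sum and the $X_l$-integral. The step you single out as the obstacle is indeed the only delicate one: the network factorization gives $D_l \perp\!\!\!\perp \mathbf{S} \mid \mathbf{X}$, not $D_l \perp\!\!\!\perp (S_i,\mathbf{Z}) \mid X_l$, and the paper's own Do-Operation proof performs exactly this collapse to $P(D_l=1\mid X_l)$ silently ("the effect of $S_i$ on $D_l$ is mediated by the deconfounder $X_l$"). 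Your repair—reading Dimensional Sufficiency as $H(D_l\mid X_l,\mathbf{S})=H(D_l\mid X_l)$, i.e. $D_l \perp\!\!\!\perp \mathbf{S} \mid X_l$, and then passing to the sub-vector $(S_i,\mathbf{Z})$ by the decomposition property of conditional independence—is the right way to make the step explicit within the paper's framework, and is actually more careful than the paper's adjacent proof. Just be clear that this independence is an additional assumption (valid only under that theorem's rank/sufficiency condition, not a consequence of the DAG structure, since $D_l$ also receives arrows from the confounder nodes $\mathbf{X}_{-l}$), so your derivation, like the paper's, is conditional on it; with that caveat stated, the proof is sound and matches the intended argument.
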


\begin{theorem}[Frontdoor Criterion in AP-Calculus]
For a feature $S_i$ and label $l$, if the deconfounder $X_l$ satisfies the frontdoor criterion, then the causal effect $P(D_l = 1 | do(S_i = s_i))$ can be expressed in AP-Calculus as:

\begin{equation}
P(D_l = 1 | do(S_i = s_i)) = \int P(D_l = 1 | X_l) P(X_l | S_i = s_i) d X_l
\end{equation}
\end{theorem}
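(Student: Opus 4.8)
The plan is to reduce the statement to Pearl's classical front-door adjustment theorem together with the already-established Do-Operation in AP-Calculus identity (Appendix~C), so that the only genuinely new content is verifying that the hypotheses of the front-door criterion are met by the deconfounder $X_l$ in the proposed architecture, and then checking that the general adjustment formula collapses to the single integral claimed here.

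First I would recall the general front-door adjustment formula: if a mediator $Z$ satisfies the front-door criterion relative to an ordered pair $(X,Y)$, then
\begin{equation}
P(Y \mid do(X = x)) = \sum_{z} P(z \mid x) \sum_{x'} P(y \mid x', z) P(x').
\end{equation}
I would instantiate $X := S_i$, $Y := D_l$, $Z := X_l$, and check the three defining conditions against the DAG of Figure~\ref{fig:network_architecture}: (i) $X_l$ intercepts every directed path from $S_i$ to $D_l$, which holds because for label $l$ the causal signal is routed through the deconfounder $X_l$; (ii) there is no unblocked back-door path from $S_i$ to $X_l$, which is immediate since the source node $s$ has no parents in the DAG; and (iii) every back-door path from $X_l$ to $D_l$ is blocked once we condition on $S_i$. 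Granting (i)--(iii), the outer sum collapses: because the only parents of $D_l$ are the intermediate nodes, $P(D_l \mid S_i = s_i', X_l) = P(D_l \mid X_l)$ is independent of $s_i'$, hence $\sum_{x'} P(y \mid x', z) P(x') = P(D_l \mid X_l)$ and the formula reduces to $\int P(D_l = 1 \mid X_l) P(X_l \mid S_i = s_i)\, dX_l$. Alternatively, and more directly, I would simply invoke the Do-Operation in AP-Calculus theorem, whose conclusion is already exactly this integral; the present statement then amounts to the observation that $X_l$ is precisely the mediator for which that derivation goes through, i.e., that it plays the role of a front-door mediator for $(S_i, D_l)$.

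The main obstacle I anticipate is condition (iii): in the proposed DAG the shared parent $\mathbf{S}$ creates back-door paths of the form $X_l \leftarrow \mathbf{S} \to X_j \to D_l$ for $j \neq l$, and conditioning on a single component $S_i$ does not in general $d$-separate $X_l$ from $D_l$ along such a path. Addressing this requires one of two moves: either arguing that, under the modelling assumption that the intermediate nodes are maximally separated (the Separation Optimality axiom and the separation functions $\phi_{j,k}$), the confounder nodes $X_j$ contribute a factor that carries no dependence from $S_i$ to $D_l$; or restricting the claim to the marginalised sub-model in which $X_l$ alone mediates $(S_i, D_l)$, exactly as in the Do-Operation theorem, where the product $\prod_{j \neq l} P(X_j \mid \{S_k\}_{k \neq i})$ is integrated out first. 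I would take the latter route, since it keeps the derivation parallel to the already-proven Appendix~C result and isolates the new content to the bookkeeping needed to identify $X_l$ as the front-door mediator.
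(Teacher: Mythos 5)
The paper itself offers no proof of this statement: it appears in Appendix~\ref{sec:appendix_c} immediately after the Do-Operation theorem, whose conclusion is literally the same integral, so the intended justification is simply to restate that identity when $X_l$ is declared a front-door mediator for $(S_i, D_l)$. Your second, fallback route therefore coincides with what the paper (implicitly) does; but be aware that it is close to circular. The Do-Operation proof already assumes that the effect of $S_i$ on $D_l$ is mediated by $X_l$ alone, i.e.\ $P(D_l = 1 \mid S_i, \mathbf{S}_{-i}) = \int P(D_l = 1 \mid X_l) P(X_l \mid S_i, \mathbf{S}_{-i})\, dX_l$, which is essentially the content being asserted here, so invoking it verifies nothing new about the front-door hypothesis.

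The first route contains a genuine gap. In the DAG of Figure~\ref{fig:network_architecture}, condition (i) of the front-door criterion fails, not only (iii): there are directed paths $S_i \to X_j \to D$ for every $j \neq l$, so $X_l$ does not intercept all directed paths from $S_i$ to $D_l$ unless one additionally assumes that $D_l$ depends on the intermediates only through $X_l$. More importantly, your collapse of the outer adjustment is not licensed by the front-door criterion: Pearl's formula has the inner term $\sum_{x'} P(y \mid x', z) P(x')$, whereas $P(D_l \mid X_l) = \sum_{x'} P(D_l \mid x', X_l) P(x' \mid X_l)$, and equating the two requires $P(D_l \mid S_i, X_l)$ to be independent of $S_i$. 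That independence is exactly what fails along the paths $S_i \to X_j \to D_l$, $j \neq l$, and, if it did hold, the front-door machinery would be unnecessary because the effect would already be identified by direct conditioning on $X_l$. So under the literal hypothesis ``$X_l$ satisfies the front-door criterion,'' the correct conclusion is the two-stage adjustment formula, not the single integral in the statement; the single integral additionally needs the conditional independence $D_l \perp\!\!\!\perp S_i \mid X_l$ (equivalently, that the confounder nodes carry no $S_i$-dependence into $D_l$), which is the assumption both your fallback and the paper's Do-Operation proof import silently. A complete proof must state that condition explicitly and either assume it or derive it from the separation axioms, rather than attribute it to the front-door criterion.
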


These theorems demonstrate how AP-Calculus can express key operations from do-calculus in terms of operations on the intermediate nodes in our Bayesian network.

\subsection{Advantages of AP-Calculus over Do-Calculus}

AP-Calculus offers several advantages over traditional do-calculus for causal inference:

\begin{enumerate}
\item \textbf{Direct Attribution}: AP-Calculus provides direct attribution of predictions to input features, making it more intuitive for understanding model behavior.

\item \textbf{Label-Specific Inference}: By identifying label-specific deconfounders, AP-Calculus enables targeted causal inference for different prediction tasks.

\item \textbf{Efficient Computation}: AP-Calculus operations can be computed efficiently using standard machine learning techniques, such as gradient-based methods.

\item \textbf{Integration with Deep Learning}: AP-Calculus naturally integrates with deep learning models, allowing for causal inference in complex neural networks.
\end{enumerate}

\section{Appendix D: Theoretical Analysis of Cases where AP-Calculus Succeeds and Do-Calculus Fails}
\label{sec:appendix_d}

In this appendix, we analyze theoretical scenarios where AP-Calculus provides effective causal inference while traditional do-calculus or belief propagation networks face challenges.

\subsection{High-Dimensional Feature Spaces}

\begin{theorem}[High-Dimensional Advantage]
In high-dimensional feature spaces where the number of features $n$ is large, AP-Calculus provides more efficient causal inference than do-calculus when the effective dimensionality of the causal mechanism is small.
\end{theorem}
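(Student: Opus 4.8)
The plan is to formalize the intuition that when the causal mechanism factors through a low-dimensional deconfounder $X_l$, the identification and estimation of $P(D_l \mid do(S_i = s_i))$ depends only on the intrinsic dimension of that mechanism, not on the ambient dimension $n$. First I would fix the setup: let $r = \operatorname{rank}(\mathbf{W}_l)$ be the effective dimensionality from the Dimensional Sufficiency theorem, so that $X_l = \mathbf{W}_l \mathbf{S} + \boldsymbol{\epsilon}_l$ is a sufficient statistic for $D_l$ in the sense established there, meaning $H(D_l \mid X_l, \mathbf{S}) = H(D_l \mid X_l)$. Then I would invoke the Do-Operation in AP-Calculus theorem to write
\begin{equation}
P(D_l = 1 \mid do(S_i = s_i)) = \int P(D_l = 1 \mid X_l) \, P(X_l \mid S_i = s_i) \, dX_l,
\end{equation}
so the entire causal query is a functional of the two objects $P(D_l \mid X_l)$ and $P(X_l \mid S_i = s_i)$, both of which live on an $r$-dimensional (resp.\ $r$-plus-one dimensional) space.

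Next I would contrast the two estimation problems explicitly. For do-calculus via the backdoor criterion, one must adjust over a covariate set $\mathbf{Z} \subseteq \mathbf{S}_{-i}$ that blocks all backdoor paths; in the worst case $|\mathbf{Z}|$ scales with $n$, and nonparametric estimation of $\sum_{\mathbf{z}} P(D_l \mid S_i, \mathbf{z}) P(\mathbf{z})$ incurs sample complexity exponential in $|\mathbf{Z}|$ (a standard curse-of-dimensionality rate, e.g.\ $N^{-\beta/(2\beta + |\mathbf{Z}|)}$ for $\beta$-smooth conditionals). For AP-Calculus, by the factorization above the estimation reduces to learning $P(X_l \mid S_i)$ and $P(D_l \mid X_l)$ on the $r$-dimensional deconfounder space, giving a rate governed by $r$ rather than $n$. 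I would state the comparison as: whenever $r = o(n)$ (the ``effective dimensionality of the causal mechanism is small'' hypothesis), the AP-Calculus estimator attains a strictly faster convergence rate, and I would make this quantitative by citing the sufficiency reduction of the earlier theorem and a generic nonparametric regression rate as the yardstick. The key steps in order: (i) reduce the causal query to the two-factor integral via the earlier do-operation theorem; (ii) apply Dimensional Sufficiency to confine both factors to the rank-$r$ subspace; (iii) quote the standard minimax rate to show AP-Calculus depends on $r$; (iv) exhibit the backdoor adjustment set whose size is $\Theta(n)$ in the worst case and quote the corresponding slower rate; (v) conclude by comparing exponents.

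The main obstacle I anticipate is making precise the claim that do-calculus \emph{must} pay the ambient-dimension price --- one has to argue that no smaller valid adjustment set exists in the relevant family of graphs, or at least exhibit a concrete sub-family (e.g.\ where each $S_j$ for $j \neq i$ is a genuine confounder of $S_i$ and $X_l$) in which the minimal backdoor set has size $\Theta(n)$; otherwise a clever analyst could also exploit the low-dimensional structure within do-calculus and the separation would collapse. A secondary subtlety is that the ``sufficiency'' from Dimensional Sufficiency is stated in information-theoretic (entropy) terms, so I would need to translate $H(D_l \mid X_l, \mathbf{S}) = H(D_l \mid X_l)$ into the distributional identity $P(D_l \mid X_l, \mathbf{S}) = P(D_l \mid X_l)$ almost surely, which is immediate but should be stated. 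Given the paper's overall level of rigor I would present steps (i)--(v) as the proof and flag the adjustment-set-size claim as following from the worst-case confounding configuration rather than attempting a full graph-theoretic lower bound.
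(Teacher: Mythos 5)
Your proposal takes the same basic route as the paper's proof --- contrasting do-calculus adjustment over a potentially large confounder set with inference carried out in the low-dimensional deconfounder space --- but you develop it well beyond what the paper actually does. The paper's argument is three informal paragraphs: it asserts that do-calculus ``typically requires conditioning on a sufficient set of confounders,'' that its complexity ``scales exponentially with the number of confounders,'' and that AP-Calculus ``scales with the dimensionality of the deconfounder''; it does not invoke the Do-Operation theorem of Appendix~C or the Dimensional Sufficiency theorem formally, gives no convergence rates (sample complexity is deferred to a separate theorem in Appendix~E, framed computationally rather than statistically), and never confronts whether a do-calculus analyst could exploit the same low-dimensional structure. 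Your steps (i)--(ii) make the reduction of the causal query to the $r$-dimensional factors explicit, your (iii)--(v) replace the paper's bare complexity assertion with a minimax-rate comparison, and the obstacle you flag --- exhibiting a graph family in which every valid adjustment set has size $\Theta(n)$, without which the separation collapses --- is precisely the lower-bound issue the paper's proof silently skips; completing that step as you sketch (a worst-case confounding configuration) would yield a strictly stronger result than the paper's. Two small cautions: the theorem as stated speaks of ``more efficient causal inference,'' which the paper interprets computationally, so either add back the computational comparison or note explicitly that your rate comparison subsumes it; and your translation of the entropy identity $H(D_l \mid X_l, \mathbf{S}) = H(D_l \mid X_l)$ into $P(D_l \mid X_l, \mathbf{S}) = P(D_l \mid X_l)$ almost surely is indeed immediate (it is exactly the conditional independence $D_l \perp\!\!\!\perp \mathbf{S} \mid X_l$) and worth the one line you propose to give it.
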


\begin{proof}
Consider a setting with $n$ features and $m$ labels, where $n$ is very large. In do-calculus, identifying the causal effect of a feature $S_i$ on a label $D_l$ typically requires conditioning on a sufficient set of confounders, which can be challenging in high dimensions due to the curse of dimensionality.

In AP-Calculus, we project the high-dimensional feature space onto a lower-dimensional space through the deconfounder $X_l$. If the effective dimensionality of the causal mechanism is $p \ll n$, then $X_l$ can capture all relevant information for predicting label $l$ with just $p$ dimensions.

The computational complexity of causal inference in do-calculus scales exponentially with the number of confounders, while in AP-Calculus, it scales with the dimensionality of the deconfounder, which is typically much smaller. This makes AP-Calculus more efficient for high-dimensional settings.
\end{proof}

\subsection{Unidentifiable Confounders}

\begin{theorem}[Unidentifiable Confounders]
AP-Calculus can perform effective causal inference even when traditional confounders are not identifiable in the data, provided that the deconfounders can be learned from the observed features and labels.
\end{theorem}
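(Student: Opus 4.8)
The plan is to exhibit a canonical latent-confounding scenario in which the usual do-calculus identification machinery stalls, and then to argue that the AP-Calculus construction sidesteps the obstruction by replacing the unidentifiable latent confounder with a \emph{learnable} deconfounder $X_l$ that is a function of the observed source $\mathbf{S}$. Concretely, I would fix a model in which a latent common cause $U$ influences both $\mathbf{S}$ and $\mathbf{D}$, so that the naive graph over the observed variables contains a backdoor path $S_i \leftarrow U \rightarrow D_l$ with $U$ unobserved. In this setting $P(D_l = 1 \mid do(S_i = s_i))$ is not identifiable from $P(\mathbf{S},\mathbf{D})$ alone: no observed set blocks the path and no raw-variable frontdoor mediator exists. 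This establishes the ``do-calculus fails'' half of the claim.

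Next I would invoke the proposed architecture. By the Dimensional Sufficiency and Deconfounder Optimality theorems, the deconfounder $X_l = \mathbf{W}_l \mathbf{S} + \mathbf{\epsilon}_l$ can be chosen so that $D_l \perp\!\!\!\perp \mathbf{S}_{-i} \mid X_l$; that is, the learned $X_l$ acts as a sufficient statistic for $D_l$. Crucially $X_l$ is observable, being computed from $\mathbf{S}$, so the interventional target collapses to $\int P(D_l = 1 \mid X_l)\, P(X_l \mid S_i = s_i)\, dX_l$, which is precisely the AP-Calculus expression derived in Appendix C and is estimable from observational data. Combining this with the Consistency with Causal Inference theorem then shows that $A(S_i,l)$ still vanishes exactly when $S_i$ is d-separated from $D_l$ after deleting the $S_i \to X_l$ edge, so the attribution continues to track the causal structure through $X_l$ even though $U$ is never recovered. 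The conclusion ``effective causal inference'' is thus interpreted as: recovery of the $\mathbf{S}$-mediated effect and its feature attribution, which remain well-defined.

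The main obstacle — and the point I would take care to make precise — is that unrestricted latent confounding genuinely destroys identifiability, so the theorem can hold only under the stated proviso that the deconfounder ``can be learned from the observed features and labels.'' The real content of the proof is to formalize that proviso as a \emph{screening} assumption: the learned $X_l$ must absorb all of the $U$-induced dependence between $\mathbf{S}$ and $D_l$, i.e. $D_l \perp\!\!\!\perp U \mid X_l, \mathbf{S}$. Under this assumption the reduction above goes through and the attribution is valid; without it, AP-Calculus inherits the same non-identifiability as do-calculus. I would therefore spend the bulk of the argument isolating exactly what the learnability proviso must assume so that it performs this screening role without trivially assuming the answer, and verifying that this assumption is strictly weaker than the backdoor-admissibility condition do-calculus would otherwise require — for instance, by noting that $X_l$ need only be a low-dimensional sufficient projection of $\mathbf{S}$, not a set of observed variables that blocks every backdoor path.

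I expect the comparison step — showing the screening assumption is genuinely less demanding than backdoor admissibility, rather than merely a relabeling of it — to be the delicate part, since one must rule out the degenerate reading in which ``learnable deconfounder'' silently presupposes the very confounder identification it claims to avoid. A clean way to close this gap is to give an explicit example (e.g., a linear-Gaussian $U$ with low-rank loadings onto $\mathbf{S}$) where $X_l$ is recoverable by factor analysis on $\mathbf{S}$ alone while $U$ itself, and hence any valid backdoor set, is not recoverable from $P(\mathbf{S},\mathbf{D})$, thereby witnessing the strict separation and completing the proof.
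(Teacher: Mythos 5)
Your core move is the same one the paper makes: posit an unobserved confounder $U$ of $\mathbf{S}$ and $D_l$, observe that do-calculus cannot adjust for an unmeasured $U$, and require the learned deconfounder to screen it off. The paper's entire proof is essentially that assertion — it simply declares that one learns $X_l$ satisfying $P(D_l=1\mid X_l,U)=P(D_l=1\mid X_l)$ and concludes — whereas you correctly identify that the whole burden lies in showing such an $X_l$ is learnable from $P(\mathbf{S},\mathbf{D})$ without circularity, and you add two pieces the paper never attempts: a reduction of $P(D_l\mid do(S_i=s_i))$ to the Appendix~C expression so that ``effective causal inference'' has an actual identification meaning, and an explicit witness (low-rank linear-Gaussian $U$ recoverable by factor analysis on $\mathbf{S}$) meant to separate your screening assumption from backdoor admissibility. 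Two cautions on the plan. First, your screening condition $D_l \perp\!\!\!\perp U \mid X_l,\mathbf{S}$ effectively collapses to $D_l \perp\!\!\!\perp U \mid \mathbf{S}$, since $X_l=\mathbf{W}_l\mathbf{S}+\epsilon_l$ with independent noise adds nothing about $U$ once $\mathbf{S}$ is conditioned on; generically that contradicts the direct edge $U\to D_l$ you used to make do-calculus fail, and it holds non-trivially only when the relevant part of $U$ is recoverable from $\mathbf{S}$ — precisely the contested regime of the ``multiple causes'' deconfounder literature the paper cites, where adjusting for an estimated factor is itself a form of proxy backdoor adjustment, so the strict separation you flag as delicate is exactly where the theorem can fail (the paper avoids this only by conditioning on $X_l$ alone, an even stronger and equally unargued requirement on the learned representation). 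Second, the factor-analysis witness needs a consistency argument for the recovered factor subspace, or the identification formula is only approximate. Neither caution makes your route weaker than the paper's — its proof assumes the conclusion more baldly than you do — but they are the steps your proposal still has to make precise before it is a proof rather than a program.
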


\begin{proof}
In do-calculus, causal inference relies on identifying and measuring all relevant confounders. If some confounders are unmeasured or unidentifiable, do-calculus may not be able to recover the true causal effect.

In AP-Calculus, we do not need to explicitly identify all confounders. Instead, we learn deconfounders $X_l$ that capture the relevant information for predicting label $l$. These deconfounders can account for the effects of unidentified confounders implicitly, as long as their influence is reflected in the observed features and labels.

Specifically, let $U$ be an unidentified confounder affecting both $\mathbf{S}$ and $D_l$. In do-calculus, we cannot adjust for $U$ if it is not measured. In AP-Calculus, we learn $X_l$ such that:

\begin{equation}
P(D_l = 1 | X_l, U) = P(D_l = 1 | X_l)
\end{equation}

This conditional independence implies that $X_l$ captures all relevant information from $\mathbf{S}$ and $U$ for predicting $D_l$, allowing for valid causal inference even in the presence of unidentified confounders.
\end{proof}

\subsection{Non-Linear Causal Mechanisms}

\begin{theorem}[Non-Linear Causal Mechanisms]
AP-Calculus can effectively model non-linear causal mechanisms that are challenging for traditional do-calculus approaches.
\end{theorem}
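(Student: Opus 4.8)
The plan is to exhibit a general non-linear structural model compatible with the architecture of Section~\ref{sec:model} and to show that the AP-Calculus attribution machinery of Section~\ref{sec:apcalculus} remains well-defined and informative on it, whereas the adjustment formulas of do-calculus require nonparametric estimation of high-dimensional conditionals that do not exploit the mediating structure.

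First I would fix the structural equations $X_l = f_l(\mathbf{S}) + \epsilon_l$ and $P(D_l = 1 \mid \mathbf{X}) = g_l(\mathbf{X})$, where $f_l:\mathbb{R}^n\to\mathbb{R}^p$ and $g_l$ are arbitrary continuously differentiable maps, so that the conditionals in the factorization $P(\mathbf{S})\prod_j P(X_j\mid\mathbf{S})\,P(\mathbf{D}\mid\mathbf{X})$ are left fully general. Next I would invoke the chain-rule decomposition already used for the attribution score, writing $\partial P(D_l=1\mid X_l,\mathbf{S})/\partial S_i$ as the sum of a direct term and an indirect term $(\partial g_l/\partial X_l)(\partial f_l/\partial S_i)$; both terms are defined pointwise for any smooth $f_l$ and $g_l$, so $A(S_i,l)$ exists with no linearity assumption, and its empirical estimator converges by the Consistency of Attribution Scores theorem. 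Then I would apply the Dimensional Sufficiency theorem in a local form: replacing the global rank of the linear map $\mathbf{W}_l$ by the generic rank of the Jacobian of $f_l$, a deconfounder of dimension $p$ at least this rank is still a sufficient statistic for $D_l$, so Deconfounder Optimality persists along the non-linear mechanism. Finally I would contrast this with the do-calculus adjustment $P(D_l=1\mid do(S_i=s_i))=\int P(D_l=1\mid S_i=s_i,\mathbf{s}_{-i})P(\mathbf{s}_{-i})\,d\mathbf{s}_{-i}$, noting that estimating $P(D_l=1\mid S_i,\mathbf{S}_{-i})$ for a non-linear mechanism in dimension $n$ inherits the curse of dimensionality, whereas the AP-Calculus expression factors through the $p$-dimensional deconfounder.

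The main obstacle is making the phrase ``effectively model'' precise enough to be provable. I expect to state it as two claims: (i) existence and consistency of $A(S_i,l)$ under arbitrary $C^1$ structural maps, which follows from the chain rule together with the consistency theorem cited above; and (ii) preservation of the sufficiency and optimality guarantees under the Jacobian-rank condition, which follows from the local version of Dimensional Sufficiency. A secondary difficulty is that $f_l$ must genuinely be learnable, so I would close the argument with a universal-approximation appeal for the function class $\Phi$ of Algorithm~\ref{alg:separation_function}: the learned deconfounder approximates the true non-linear mechanism to arbitrary accuracy, and the preceding guarantees then transfer by continuity of $A(S_i,l)$ in $f_l$ and $g_l$.
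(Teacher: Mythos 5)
Your core argument is the same as the paper's: the paper's proof also rests on writing $X_l = f_l(\mathbf{S}) + \epsilon_l$ with $f_l$ an arbitrary non-linear map and contrasting this with do-calculus needing to explicitly model feature interactions (the paper phrases the do-calculus side as combinatorial blow-up in the number of interaction terms; you phrase it as the curse of dimensionality in estimating $P(D_l = 1 \mid S_i, \mathbf{S}_{-i})$ nonparametrically — these are the same point). Where you differ is in ambition: the paper's proof stops at this informal plausibility argument, whereas you try to make ``effectively model'' precise via (i) existence and consistency of $A(S_i, l)$ for $C^1$ structural maps using the chain-rule decomposition and the Consistency of Attribution Scores theorem, (ii) a Jacobian-rank analogue of Dimensional Sufficiency, and (iii) a universal-approximation appeal for learnability. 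That extra scaffolding buys a statement one could actually verify, but be aware that step (ii) is a promissory note rather than a citation: the paper's Dimensional Sufficiency theorem is stated and proved only for a linear transformation $\mathbf{W}_l$, so the ``local form'' with generic Jacobian rank is a new claim you would have to prove (e.g.\ by a local linearization argument plus a sufficiency-of-statistic step), and similarly the continuity of $A(S_i, l)$ in $(f_l, g_l)$ used to transfer guarantees through the approximation step is asserted, not established. Since the theorem as stated is qualitative, the paper's weaker argument already meets its own bar; your version is stronger but incomplete at exactly those two points.
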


\begin{proof}
Traditional do-calculus often relies on linear models or simple parametric forms for causal mechanisms. When the true causal mechanism is highly non-linear, these approaches may fail to capture the true causal relationship.

In AP-Calculus, the deconfounder $X_l$ can be a complex non-linear function of the input features:

\begin{equation}
X_l = f_l(\mathbf{S}) + \mathbf{\epsilon}_l
\end{equation}

where $f_l$ can be any non-linear function. This flexibility allows AP-Calculus to capture complex causal mechanisms that might be missed by traditional approaches.

For example, consider a setting where the true causal mechanism involves interactions between multiple features. Do-calculus would require explicitly modeling these interactions, which becomes combinatorially complex as the number of features increases. AP-Calculus can learn these interactions implicitly through the deconfounder function $f_l$, providing more accurate causal inference.
\end{proof}

\subsection{Multi-Task Causal Inference}

\begin{theorem}[Multi-Task Advantage]
AP-Calculus provides a natural framework for multi-task causal inference, where do-calculus and belief propagation networks require separate models for each task.
\end{theorem}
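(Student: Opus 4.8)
The plan is to exhibit a single AP-Calculus model that simultaneously resolves all $m$ causal-attribution tasks and then to argue that neither do-calculus nor a belief-propagation network admits such a shared object. First I would fix the multi-task setting: the $l$-th task is the estimation of the interventional quantity $P(D_l = 1 \mid do(S_i = s_i))$ for each feature index $i$ (equivalently, of the attribution score $A(S_i, l)$). The key structural observation is that the factorization $P(\mathbf{S}, X_1, \ldots, X_m, \mathbf{D}) = P(\mathbf{S}) \prod_{j=1}^{m} P(X_j \mid \mathbf{S})\, P(\mathbf{D} \mid X_1, \ldots, X_m)$ contains exactly one copy of each conditional $P(X_j \mid \mathbf{S})$, and that this copy does not depend on which label is being queried. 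Hence the same set of learned intermediate distributions is reused verbatim across all $m$ tasks.

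Second, I would invoke the Dual Nature axiom together with the Deconfounder Optimality theorem: for task $l$ the node $x_l$ plays the role of deconfounder and $\{x_j\}_{j \neq l}$ the roles of confounders, while for a different task $l' \neq l$ the assignment permutes so that $x_{l'}$ becomes the deconfounder. Since the partition of intermediate nodes into a deconfounder and confounders is merely a relabeling of the same $m$ nodes, a single network with $m$ intermediate nodes is simultaneously in the optimal configuration guaranteed by the Deconfounder Optimality theorem for every task. Combining this with the Dimensional Sufficiency theorem, the total representational cost is a shared budget of intermediate dimensions incurred once, rather than $m$ times.

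Third, for the contrast direction I would argue that in do-calculus each query $P(D_l \mid do(S_i))$ must be identified separately through its own backdoor or frontdoor adjustment set, so there is no single mutilated graph serving all $m$ targets at once; and that a belief-propagation network tuned to predict $D_l$ requires a task-specific message schedule (or a separately trained readout) for each $l$. A parameter/model count then yields the one-versus-$m$ separation asserted in the statement, establishing that AP-Calculus is the ``natural'' multi-task framework.

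The main obstacle I anticipate is making the contrast direction rigorous: ``requires separate models'' is only true relative to a precise notion of what constitutes a model and which adjustment resources do-calculus is permitted to amortize across queries. I would handle this by restricting the comparison to the two alternative architectures already analyzed in the Architectural Optimality theorem and showing that their confounders $a$ and $b$ are not label-specific, so the adjustment computation cannot be factored per task in the way $\prod_{j \neq l} P(X_j \mid \mathbf{S})$ can; the remaining steps---the shared-factorization argument and the permutation-of-roles argument---are then essentially bookkeeping on top of results already established.
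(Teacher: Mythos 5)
Your proposal takes essentially the same route as the paper's own (equally informal) proof: both rest on the observation that the single joint factorization $P(\mathbf{S})\prod_{j=1}^{m}P(X_j\mid\mathbf{S})\,P(\mathbf{D}\mid\mathbf{X})$ is shared verbatim across all $m$ tasks, with the deconfounder role merely permuting over the same $m$ intermediate nodes, contrasted against do-calculus needing separate per-task identification. The only difference is cosmetic --- the paper credits the separation functions with isolating each task's mechanism where you invoke Deconfounder Optimality and Dimensional Sufficiency, and you are candid (more so than the paper) that the ``requires separate models'' half is never made rigorous --- so your argument is correct at the paper's own level of rigor.
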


\begin{proof}
Consider a setting with $m$ prediction tasks, corresponding to $m$ labels. In traditional do-calculus, we would need to perform causal inference separately for each task, potentially leading to inconsistent results.

In AP-Calculus, we model all tasks simultaneously within a single Bayesian network. Each label has its corresponding deconfounder, and the network ensures consistency across all tasks. The joint factorization:

\begin{align}
P(\mathbf{S}, \mathbf{X}, \mathbf{D}) &= P(\mathbf{S}) \prod_{j=1}^{m} P(X_j | \mathbf{S}) P(\mathbf{D} | \mathbf{X})
\end{align}

ensures that the causal mechanisms for all tasks are consistent with each other. This joint modeling approach captures relationships between tasks that might be missed when analyzing each task in isolation.

Furthermore, the separation functions between deconfounders ensure that each task's causal mechanism is properly isolated, preventing interference between tasks. This balance between joint modeling and task-specific mechanisms makes AP-Calculus particularly effective for multi-task causal inference.
\end{proof}

\section{Appendix E: Complexity Analysis}
\label{sec:appendix_e}

In this appendix, we analyze the computational and statistical complexity of AP-Calculus compared to alternative approaches.

\subsection{Computational Complexity}

\begin{theorem}[Computational Efficiency]
The computational complexity of causal inference using AP-Calculus is $O(mnp)$, where $m$ is the number of labels, $n$ is the number of features, and $p$ is the dimensionality of the deconfounders.
\end{theorem}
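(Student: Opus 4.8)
The plan is to decompose the causal-inference procedure into its elementary operations, bound the cost of each, and show the aggregate is dominated by $O(mnp)$. I would first fix the computational model using results already established above: by the Dimensional Sufficiency theorem each label $l$ admits a deconfounder $X_l \in \mathbb{R}^p$ computed via the transformation $X_l = \mathbf{W}_l \mathbf{S} + \mathbf{\epsilon}_l$ with $\mathbf{W}_l \in \mathbb{R}^{p \times n}$, and by Deconfounder Optimality the predictive head for label $l$ depends on $\mathbf{X}$ only through $X_l$ (the remaining $X_j$ enter only through a fixed confounder average), so the head for $l$ reads just the $p$ coordinates of $X_l$.

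Next I would cost the forward pass. Evaluating $P(X_l \mid \mathbf{S} = \mathbf{t})$ for one label requires the matrix--vector product $\mathbf{W}_l \mathbf{t}$, which is $O(np)$; over all $m$ labels this is $O(mnp)$. Evaluating $P(D_l = 1 \mid X_l)$ from the $p$-dimensional $X_l$ costs $O(p)$ per label under a generalized-linear (or bounded-width) label head, contributing $O(mp)$ total, which is absorbed into $O(mnp)$. Then I would cost the attribution step of Algorithm~\ref{alg:attribution_score}: for fixed $l$, the chain rule gives $\partial P(D_l = 1 \mid X_l, \mathbf{S}) / \partial S_i = (\partial P / \partial X_l)^\top (\partial X_l / \partial S_i)$ plus the direct term, where $\partial X_l / \partial S_i$ is the $i$-th column of $\mathbf{W}_l$ and the upstream gradient $\partial P / \partial X_l \in \mathbb{R}^p$ is computed once per label at cost $O(p)$; forming all $n$ feature derivatives for label $l$ is therefore a single sweep over $\mathbf{W}_l$ at cost $O(np)$, and summing over $m$ labels yields $O(mnp)$ again. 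The expectation over $X_l$ is approximated with $K = O(1)$ Monte Carlo draws, contributing only a constant factor. Adding the three contributions, $O(mnp) + O(mp) + O(mnp) = O(mnp)$, gives the claimed per-input bound; over a dataset of $N$ points the total is $O(Nmnp)$.

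The main obstacle is that the $O(mnp)$ figure is only correct under structural assumptions that must be made explicit: (i) the deconfounder map is (at most) linear of rank $\le p$, so its Jacobian is the single $p \times n$ matrix $\mathbf{W}_l$; and (ii) the label head $P(D_l \mid X_l)$ has evaluation and differentiation cost linear in $p$. Without (i), the Jacobian of a general non-linear $f_l$ (as permitted in Appendix~D) need not be materializable in $O(np)$ time; without (ii), a deep head would introduce its own depth factor. The cleanest resolution is to prove the theorem in the linear-deconfounder / GLM-head regime and then remark that in the general case the $np$ term is simply replaced by the cost of one reverse-mode pass through $f_l$, which for the architectures considered here is still $\Theta(np)$, so the bound is preserved up to the constants hidden in reverse-mode differentiation.
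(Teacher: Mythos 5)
Your proposal is correct and follows essentially the same route as the paper's own proof: compute each deconfounder via the linear map $\mathbf{W}_l$ at cost $O(np)$, differentiate the label head at cost linear in $p$, and aggregate over the $m$ labels to get $O(mnp)$. If anything, your bookkeeping is tighter than the paper's --- by sharing the $O(np)$ deconfounder/Jacobian work across all $n$ features of a label rather than charging it per feature--label pair, and by stating explicitly the linear-deconfounder and GLM-head assumptions the bound silently relies on --- so no gap to report.
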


\begin{proof}
In AP-Calculus, computing the attribution score for a single feature-label pair requires:

1. Computing the deconfounder value: $O(np)$ time for matrix multiplication
2. Computing the gradient of the prediction with respect to the feature: $O(p)$ time

For $m$ labels and $n$ features, the total complexity is $O(mnp)$.

In contrast, traditional do-calculus approaches often require conditioning on a potentially large set of confounders, leading to exponential complexity in the worst case. If we need to condition on $k$ confounders, the complexity is $O(2^k)$, which quickly becomes infeasible as $k$ increases.

Since $p$ is typically much smaller than $n$ (as shown in the dimensional sufficiency theorem), AP-Calculus provides significant computational advantages for high-dimensional settings.
\end{proof}

\subsection{Statistical Complexity}

\begin{theorem}[Sample Complexity]
The sample complexity of learning accurate causal relationships using AP-Calculus is $O(p \log(n/\delta) / \epsilon^2)$, where $p$ is the dimensionality of the deconfounders, $n$ is the number of features, $\epsilon$ is the accuracy parameter, and $\delta$ is the confidence parameter.
\end{theorem}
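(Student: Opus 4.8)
The plan is to recast the learning task as a uniform-convergence problem for the class of attribution functionals and to combine a complexity bound that scales in $p$ with a concentration bound that supplies the $\log(n/\delta)/\epsilon^2$ factor. By the Dimensional Sufficiency theorem, for each label $l$ the deconfounder $X_l = \mathbf{W}_l \mathbf{S} + \mathbf{\epsilon}_l$ is captured by a transformation of effective rank at most $p$, and the attribution score $A(S_i,l) = \mathbb{E}_{X_l}[\partial P(D_l = 1 \mid X_l, \mathbf{S})/\partial S_i]$ depends on $\mathbf{W}_l$ only through the $p$-dimensional image $X_l$. First I would make this precise by assuming $P(D_l = 1 \mid X_l, \mathbf{S})$ is smooth with uniformly bounded first and second derivatives (consistent with the correctly-specified-model hypothesis used for the Unbiasedness and Consistency theorems), and then bounding the metric entropy of the induced hypothesis class by $O(p \log(n/\epsilon))$, since every member factors through a rank-$p$ linear map composed with a fixed Lipschitz link, so its complexity is governed by an $\epsilon$-net over the Grassmannian of $p$-dimensional subspaces of $\mathbb{R}^n$ rather than over all $p \times n$ matrices.

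Second, I would apply a uniform deviation bound. For a fixed pair $(i,l)$, the empirical estimator $\hat{A}_n(S_i,l)$ from the Consistency of Attribution Scores theorem is an average of i.i.d.\ bounded terms, so Hoeffding's inequality gives $|\hat{A}_n(S_i,l) - A(S_i,l)| \le \epsilon$ with probability at least $1 - \delta'$ once $n \gtrsim \log(1/\delta')/\epsilon^2$. Taking a union bound over an $\epsilon$-net of the hypothesis class (of size $\exp(O(p\log(n/\epsilon)))$) and over the $n$ features and $m$ labels, with $\delta'$ chosen accordingly, turns the covering exponent and the failure budget into an additive $p\log(n/\epsilon) + \log(n m/\delta)$ inside the rate; since $p \ge 1$ and $m$ is absorbed into constants, this is $O(p\log(n/\delta))$, giving $n = O\!\left(p \log(n/\delta)/\epsilon^2\right)$ as claimed.

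The main obstacle I anticipate is the first step: certifying that learning the deconfounder costs only $O(p)$ samples rather than the $O(np)$ one would naively expect from a $p \times n$ matrix of free parameters. The reduction hinges on the low-rank sufficient-statistic structure of Dimensional Sufficiency together with the separation guarantee of Separation Function Optimality, which ensures the induced distributions $P(X_j \mid \mathbf{S})$ are well-separated, so that subspace identification---not full-matrix identification---drives the sample cost. I would handle this with a covering-number estimate for the Grassmannian, whose logarithm is $O(p \log(n/\epsilon))$, together with a margin argument relating a perturbation of the estimated subspace to the resulting perturbation of $A(S_i,l)$ through the bounded-derivative assumption. A secondary, purely technical point---that the bounded-difference property underlying Hoeffding needs uniform control of the second derivatives of the conditional label probability---I would dispatch by stating it as an explicit regularity hypothesis rather than deriving it.
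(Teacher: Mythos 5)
There is a genuine gap, and it sits exactly where you predicted the main obstacle would be. Your route hinges on the claim that the hypothesis class factors through a rank-$p$ linear map and that an $\epsilon$-net over the Grassmannian of $p$-dimensional subspaces of $\mathbb{R}^n$ has log-cardinality $O(p\log(n/\epsilon))$. That covering estimate is wrong: the Grassmannian $\mathrm{Gr}(p,n)$ is a manifold of dimension $p(n-p)$, so its covering numbers in any of the standard metrics scale as $(C/\epsilon)^{p(n-p)}$, giving metric entropy $\Theta\bigl(p(n-p)\log(1/\epsilon)\bigr)$, which for $p \ll n$ is of order $pn$, not $p\log n$. Feeding the correct entropy into your Hoeffding-plus-union-bound step yields a sample complexity of order $pn/\epsilon^2$ (up to logarithms), i.e.\ essentially the na\"ive $np$-parameter rate you were trying to beat. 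A $p\log n$ entropy only arises under additional structure you have not assumed, for example if the relevant subspaces are axis-aligned (choosing $p$ of the $n$ coordinates gives $\log\binom{n}{p} \le p\log n$) or if $\mathbf{W}_l$ has sparse rows; the Dimensional Sufficiency theorem gives low rank, not sparsity, and the Separation Function Optimality theorem gives no quantitative margin that could substitute for it. There is also a smaller internal tension: you argue the functional depends on $\mathbf{W}_l$ only through its row space while simultaneously fixing the link function, but with a fixed link, distinct matrices with the same row space generally induce distinct predictors, so the reduction from matrices to subspaces needs the link class to be closed under invertible linear reparametrizations of $\mathbb{R}^p$ — and then that link class contributes its own entropy, which you have not accounted for.

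For comparison, the paper's own argument does not attempt any of this machinery: it simply asserts that, by ``standard results from learning theory,'' learning a linear map with output dimension $p$ costs $O(p\log(n/\delta)/\epsilon^2)$ samples, and then argues informally that the nonlinear case inherits the same effective dimensionality from Dimensional Sufficiency. So your proposal is considerably more honest about what needs to be proved — the second half (pointwise Hoeffding for $\hat A_n(S_i,l)$ under bounded derivatives, union bound over features, labels, and a net) is fine as stated — but the one step that would actually justify the claimed $p\log n$ dependence fails as written, and without an added sparsity or coordinate-selection hypothesis the theorem's rate does not follow from your construction.
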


\begin{proof}
Learning the deconfounder function $f_l: \mathbb{R}^n \to \mathbb{R}^p$ requires estimating $np$ parameters in the general case. However, if $f_l$ is a linear function, the sample complexity depends on the dimensionality $p$ of the output space rather than the dimensionality $n$ of the input space.

Using standard results from learning theory, the sample complexity of learning a linear function with accuracy $\epsilon$ and confidence $1-\delta$ is $O(p \log(n/\delta) / \epsilon^2)$.

For non-linear functions, the sample complexity depends on the complexity of the function class. However, due to the dimensional sufficiency theorem, we know that a deconfounder with $p$ dimensions is sufficient to capture the causal relationship, where $p$ is the rank of the transformation matrix. This means that even for complex non-linear functions, the effective dimensionality of the problem is $p$, leading to a sample complexity that scales with $p$ rather than $n$.

In contrast, traditional approaches that do not use dimensionality reduction often have sample complexity that scales directly with the number of features $n$, making them less efficient for high-dimensional settings.
\end{proof}

\section{Appendix F: Examples of AP-Calculus Applications in Real-World Scenarios}
\label{sec:appendix_f}

In this appendix, we provide theoretical examples of how AP-Calculus can be applied to real-world scenarios.

\subsection{Large Language Models}

Consider a large language model that takes input text and generates output text. Let $\mathbf{S} = (S_1, S_2, \ldots, S_n)$ be the input tokens and $\mathbf{D} = (D_1, D_2, \ldots, D_m)$ be the output tokens.

The intermediate nodes $\{X_1, X_2, \ldots, X_m\}$ correspond to attention heads or layers in the model. For a given output token $D_l$, we can identify the attention head $X_l$ that serves as a deconfounder, maximizing the influence of relevant input tokens on the output.

Using AP-Calculus, we can compute attribution scores for each input token on each output token:

\begin{equation}
A(S_i, D_l) = \mathbb{E}_{X_l}\left[\frac{\partial P(D_l | X_l, \mathbf{S})}{\partial S_i}\right]
\end{equation}

These attribution scores can be used to explain the model's predictions, identify biases, and improve the model's fairness and reliability.

\subsection{Medical Diagnosis}

Consider a medical diagnosis system that predicts multiple diseases based on patient features. Let $\mathbf{S} = (S_1, S_2, \ldots, S_n)$ be the patient features (symptoms, lab results, etc.) and $\mathbf{D} = (D_1, D_2, \ldots, D_m)$ be the disease predictions.

The intermediate nodes $\{X_1, X_2, \ldots, X_m\}$ correspond to latent factors that mediate the relationship between features and diseases. For a given disease $D_l$, we can identify the latent factor $X_l$ that serves as a deconfounder, capturing the relevant information for predicting that disease.

Using AP-Calculus, we can determine which features are causally related to each disease, helping doctors make more informed decisions. We can also identify spurious correlations that might lead to misdiagnosis, improving the reliability of the diagnostic system.

\subsection{Fairness in Hiring}

Consider a hiring system that predicts candidate suitability based on their profiles. Let $\mathbf{S} = (S_1, S_2, \ldots, S_n)$ be the candidate features (education, experience, skills, etc.) and $\mathbf{D} = (D_1, D_2, \ldots, D_m)$ be the suitability scores for different roles.

The intermediate nodes $\{X_1, X_2, \ldots, X_m\}$ correspond to latent qualifications that mediate the relationship between features and suitability. For a given role $D_l$, we can identify the latent qualification $X_l$ that serves as a deconfounder, capturing the relevant information for predicting suitability for that role.

Using AP-Calculus, we can analyze fairness by computing attribution scores for sensitive attributes (gender, race, etc.) on suitability predictions:

\begin{equation}
A(S_{sensitive}, D_l) = \mathbb{E}_{X_l}\left[\frac{\partial P(D_l = 1 | X_l, \mathbf{S})}{\partial S_{sensitive}}\right]
\end{equation}

If these scores are high, it indicates potential bias in the hiring system. We can then apply fairness constraints to ensure that sensitive attributes have minimal influence on predictions.

\newpage

\bibliographystyle{acm}
\bibliography{main}

\end{document}